\setlist[enumerate]{itemsep=0mm}
\colorlet{siaminlinkcolor}{green!50!black}
\colorlet{siamexlinkcolor}{red!50!black}
\newcommand{\rd}{{\mathrm d}}
\newcommand{\vx}{{\bf x}}
\newcommand{\vd}{{\bf d}}
\newcommand{\vz}{{\bf z}}
\newcommand{\vr}{{\bf r}}
\newcommand{\vs}{{\bf  s}}
\newcommand{\vf}{{\bf f}}
\newcommand{\vv}{{\bf  v}}
\newcommand{\vM}{{\bf M}}
\newcommand{\vPhi}{{\mbox{\boldmath$\Phi$}}}
\newcommand{\calH}{{\cal H}}
\newcommand{\calD}{{\cal D}}
\newcommand{\calF}{{\cal F}}
\newcommand{\calN}{{\cal N}}
\newcommand{\calL}{{\cal L}}
\newcommand{\calP}{{\cal P}}
\newcommand{\calW}{{\cal W}}
\newcommand{\argmin}{\operatornamewithlimits{argmin}}
\newcommand{\Qb}{\mathbb{Q}}
\newcommand{\Pb}{\mathbb{P}}
\newcommand{\Rb}{\mathbb{R}}
\newcommand{\Eb}{\mathbb{E}}
\newcommand{\Zb}{\mathbb{Z}}
\newcommand{\A}{\mathscr{A}}
\newcommand{\vm}{{\bf m}}
\newcommand{\calV}{{\cal V}}
\newcommand{\calB}{{\cal B}}
\newcommand{\calE}{{\cal E}}
\newcommand{\vTheta}{{\mbox{\boldmath$\Theta$}}}
\newcommand{\Langle}{{\Big\langle}}
\newcommand{\Rangle}{{\Big\rangle}}
\newcommand{\vast}{\bBigg@{3.5}}
\newcommand{\Vast}{\bBigg@{4}}
\newcommand{\vastt}{\bBigg@{4.5}}
\newcommand{\Vastt}{\bBigg@{5}}
\newcommand{\mysub}[2][]
{\bgroup
  \sbox0{#2}\usebox0
  \dimen0=\dimexpr \textwidth-\wd0\relax
  \ifx\\\@centercr \divide\dimen0 by 2\fi
  \sbox1{\begin{minipage}[t]{\dimen0}
    \subcaption{#1}%
  \end{minipage}}%
  \rlap{\raisebox{\dimexpr \ht0-\ht1}[0pt][0pt]{\usebox1}}\allowbreak
\egroup}
\newtheorem{theorem}{Theorem}[section]
\newtheorem{proposition}[theorem]{Proposition}
\newtheorem{definition}{Definition}[section]
    \newcommand\email[1]{\_email #1\q_nil}
    \def\_email#1@#2\q_nil{%
      \href{mailto:#1@#2}{{\emailfont #1\emailampersat #2}}
    }
    \newcommand\emailfont{}
    \newcommand\emailampersat{\small@}
\begin{document}
	
	\title{
	\rule{\linewidth}{4pt}\vspace{0.3cm} \Large \textbf{
	  Stochastic Spatio-Temporal Optimization for Control and Co-Design of Systems in Robotics and Applied Physics 
	}\\ \rule{\linewidth}{1.5pt}}
	\author{Ethan N. Evans$^{a,}\thanks{Corresponding Author. Email: \email{eevans41@gatech.edu}}~$, Andrew P. Kendall$^{a}$, and Evangelos A. Theodorou$^{a,b}$\\ \vspace{-0.1cm}
	\small{$^a$Georgia Institute of Technology, Department of Aerospace Engineering} \\ \vspace{-0.2cm}
	\small{$^b$Georgia Institute of Technology, Institute of Robotics and Intelligent Machines} }
	
	\date{\small{This manuscript was compiled on \today}}
	
	\maketitle

\begin{abstract}
Correlated with the trend of increasing degrees of freedom in robotic systems is a similar trend of rising interest in Spatio-Temporal systems described by Partial Differential Equations (PDEs) among the robotics and control communities. These systems often exhibit dramatic under-actuation, high dimensionality, bifurcations, and multimodal instabilities. Their control represents many of the current-day challenges facing the robotics and automation communities. Not only are these systems challenging to control, but the design of their actuation is an NP-hard problem on its own. Recent methods either discretize the space before optimization, or apply tools from linear systems theory under restrictive linearity assumptions in order to arrive at a control solution. This manuscript provides a novel sampling-based stochastic optimization framework based entirely in Hilbert spaces suitable for the general class of \textit{semi-linear} SPDEs which describes many systems in robotics and applied physics. This framework is utilized for simultaneous policy optimization and actuator co-design optimization. The resulting algorithm is based on variational optimization, and performs joint episodic optimization of the feedback control law and the actuation design over episodes. We study first and second order systems, and in doing so, extend several results to the case of second order SPDEs. Finally, we demonstrate the efficacy of the proposed approach with several simulated experiments on a variety of SPDEs in robotics and applied physics including an infinite degree-of-freedom soft robotic manipulator.
\end{abstract}

\newpage
\section{Introduction}\label{sec:intro}
In many complex natural processes, a variable such as temperature or displacement has values that are time varying on a spatial continuum over which the system is defined. These spatio-temporal processes are typically described by \acp{PDE}, are ubiquitous in nature, and are increasingly prevalent throughout the robotics community. Some particularly interesting natural processes include the stochastic Navier-Stokes equation which governs fluid flow and turbulence, the stochastic Nonlinear Schrödinger (NLS) equation, which governs the dynamics of the wavefunction of subatomic particles~\cite{bouten2004stochastic}, the stochastic Nagumo equation which governs how voltage travels across a neuron in a brain~\cite[Chapter 10]{lord_powell_shardlow_2014}, and the  stochastic Kuramoto-Sivashinsky (KS) equation which governs flame front propagation in combustion~\cite{gomes2017controlling}. 

Beyond their role in natural process, several PDE models have been applied in challenging robotics problems. Multi-agent consensus-based control has demonstrated connections to the Heat equation~\cite{ferrari2006analysis}. Swarm robotics can be described by a Burgers-like Reaction-advection-diffusion \ac{PDE}~\cite{elamvazhuthi2018pde}. Robot navigation in crowded environments can be described by Nagumo-like \acp{PDE}~\cite{aidman2008coupled}. Soft robotic limbs can be modelled as damped Euler-Bernoulli systems~\cite{shapiro2015modeling} or as a continuum of spring-mass-dampers \cite{yekutieli2005dynamic}. 

Despite their ubiquity in nature and engineering, the theory of control of \ac{SPDE} systems was only introduced in the last few decades~\cite{da2014stochastic,fabbri} and remains incomplete especially for stochastic boundary control. This is largely due to the extraordinary challenges that \acp{SPDE} present from the perspective of control. Namely, \acp{SPDE} often exhibit a significant time delay from a control signal, dramatic under-actuation, high system dimensionality, and can have numerous bifurcations and multi-modal instabilities. Furthermore, realistic representations of these systems are stochastic, and typically are modeled with a form of stochasticity that acts spatially and temporally. 

From the perspective of mathematics, the existence and uniqueness of solutions of \acp{SPDE} remains an open problem for many systems. When solutions do exist, they often have a weak notion of  differentiability if at all. Furthermore, analysis of their dynamics must be treated with a suitable calculus over functionals. Finally their state vectors are often described by vectors in an infinte-dimensional time-indexed Hilbert space, even for scalar 1-dimensional \acp{SPDE}. Put together, mathematically consistent and numerically realizable algorithms for control of spatio-temporal systems represent many of the largest current-day challenges facing the robotics and automatic control communities.

Coupled to the challenges of control are the challenges of designing an effective actuation of the system such that the system experiences, and \textit{maximizes}, the effect of some control policy. Such an actuation design problem is its own NP-hard problem due to the continua of possible actuator designs and possible placements over the spatial continuum of the domain of the \acp{SPDE}. Furthermore, actuation design performance is coupled to the performance of the control policy, and it is quite easy to confuse poor control performance with poor actuator design and vice versa. As a result, the challenge of a-priori deducing optimal actuation by a "human expert" that leverages the dynamics, even for relatively simple \acp{SPDE}, is quite daunting and often results in naive choices.

This manuscript addresses stochastic optimal control and co-design of \acp{SPDE} through the lens of stochastic optimization. We propose a joint policy network optimization and actuator co-design optimization strategy via episodic reinforcement that leverages inherent spatio-temporal stochasticity in the dynamics for optimization. The resulting stochastic gradient descent approach bootstraps off the widespread success of \ac{SGD} methods such as ADAM for training \acp{ANN}. The stochastic calculus are extended to handle second-order \acp{SPDE} in order to address continuum mechanical systems, which in their mathematical treatment resemble their second-order \ac{ODE} counterparts prevalent in mechanical systems.

This work is motivated by many of the applications of \acp{PDE} in robotics, yet primarily seeks to establish capabilities for the eventual design, fabrication, and control of soft-body robots. The behavior of such systems in general follow second order \acp{SPDE}. As such, while the proposed methods are general to first and second order systems, we focus our mathematical formulation on second order \acp{SPDE}.
\newline

\noindent\textbf{Our contributions.}
Our approach is founded on a general principle coming from thermodynamics that also has had success in stochastic optimal control literature~\cite{TheodorouCDC2012}
\begin{equation}\label{eq:Free_Energy_Relative_Entropy}
\text{Free Energy} \leq \text{Work} - \text{Temperature} \times \text{Entropy} 
\end{equation}
We leverage this principle in order to derive a measure-theoretic loss function that utilizes exponential averaging over importance sampled system trajectories in order to choose network and actuator design parameters that simultaneously minimize state cost and control effort. This work unifies our prior work in \cite{Evans2019IDVRL} and \cite{evans2020spatio}, and provides several extensions including  scaling policy and actuator co-design optimization to large 2D systems, and extending the approach to complex nonlinear 2D second-order systems that more closely resemble a soft-robotic limb. Specifically we contribute the following:
\begin{enumerate}
    \item A practical method to reduce sub-optimal convergence in actuator co-design optimization
    \item Scalability improvements which enable joint policy and co-design optimization for 2D \acp{SPDE}
    \item Demonstration of the policy and co-design optimization on a complex nonlinear 2D second-order \ac{SPDE} model of a soft-robotic limb
\end{enumerate}

\section{Related Work} \label{sec:related_work}

Despite their ubiquity, their challenging nature has caused the theory of control of \acp{SPDE} to be introduced only in the last few decades~\cite{da1994stochastic,fabbri} and remains incomplete especially for stochastic boundary control. Numerical results and algorithms for distributed control of \acp{SPDE} are limited and typically require some model reduction approach~\cite{lou2009model,gomes2017controlling}. In~\cite{StochasticBurgers_1999}, the authors approach the control of the stochastic Burgers equation through the \ac{HJB} theory by applying the linear Feynman-Kac lemma; nevertheless, it lacks numerical results. In~\cite{moura2013optimal}, the authors treat optimal control of linear deterministic \acp{PDE} by applying linear control theory, however this work is limited to linear \acp{PDE}. The book~\cite{fabbri} gives a complete understanding of our ability, so far, to apply optimal control theory to these systems.

Most notable among existing infinite dimensional control frameworks,~\cite{DaPrato1999} investigates explicit solutions to the \ac{HJB} equation for the stochastic Burgers equation based on an exponential transformation, and~\cite{feng2006} provides an extension of the large deviation theory to infinite dimensional spaces that creates connections to \ac{HJB} theory. These and most other works on \ac{HJB} theory for SPDEs mainly focus on theoretical contributions and leave literature with algorithms and numerical results tremendously sparse. Furthermore, \ac{HJB} theory for boundary control has certain mathematical difficulties which impose limitations.

Outside of infinite dimensional methods are a body of work that treats control of soft robotic manipulators through finite dimensional nonlinear control methods. These methods are typically applied to either constant curvature or piecewise constant curvature \ac{ODE} models, and often further reduce dimensionality through dynamic inversion or orthogonal projection. A complete review of these methods can be found in \cite{george2018control}.

\begin{figure}[!t]
\centering
  \includegraphics[width=0.7\linewidth]{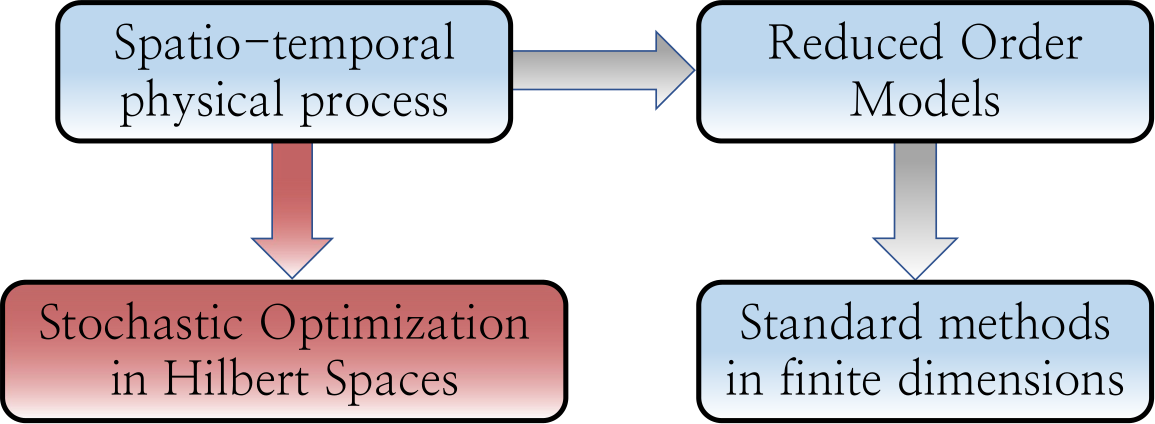}
  \caption{Our proposed approach versus traditional approaches.}
\label{fig:approach_diagram}
\end{figure}

The majority of recent results are composed of a growing body of work that often rely on machine learning techniques, and seek control of \acp{PDE} by immediately reducing them to a set of ODEs~\cite{bieker2019deep,nair2019cluster,mohan2018deep,morton2018deep,rabault2019artificial,satheeshbabu2019open}. They do not consider stochasticity and typically use standard tools from finite-dimensional control theory. In some cases, such as in \cite{morton2018deep}, the resulting methods can violate stabilizability conditions. In~\cite{rabault2019artificial} the authors successfully control a Navier-Stokes system with reinforcement learning on policy networks in a deterministic, finite \ac{ODE} setting. Similarly,~\cite{bieker2019deep} presents a Deep \ac{RNN} framework with \ac{MPC} to control a finite, deterministic \ac{ODE} representation (CFD solver) of a Navier-Stokes system. In the soft robotics setting, \cite{satheeshbabu2019open} applies deep reinforcement learning, more precisely deep Q-learning, on a discrete finite markov decision process representation of a soft pneumatic-driven manipulator in order to obtain an open-loop position controller to control deflections at the tip. In~\cite{spielberg2019learning}, the authors similarly apply standard finite dimensional deep learning methods for policy and actuator co-design optimization of deformable body robots for locomotion by wrapping clustering and deep reinforcement learning around a differentiable simulator. Other recent finite-dimensional machine learning-based methods are covered in the review paper \cite{george2018control}.

These finite dimensional machine learning methods generally rely on standard optimality principles from the finite dimensional \ac{SOC} literature, namely the Dynamic Programming (or Bellman) principle and the stochastic Pontryagin Maximum principle~\cite{Pontryagin1962,Bellman1964,yong1999stochastic}, which typically provide solutions to the \ac{HJB} equation that suffer from the curse of dimensionality. In contrast to typical Pontryagin and \ac{HJB} methods, the SDE control literature presents probabilistic representations of the \ac{HJB} \ac{PDE} that can solve scalability via sampling techniques~\cite{Pardoux_Book2014,Fleming2006} including iterative sampling and/or parallelizable implementations~\cite{williams2017model}.

In the context of actuator co-design optimization for \acp{PDE}, several works have addressed optimal placement of actuators and sensors in the linear regime. In \cite{yang2017optimal}, minimum-norm control methods are used to place actuators for the stochastic heat equation. Similarly, $H_\infty$ and $H_2$ objectives are used for placement of actuators in flexible structures in~\cite{lim1992method,nestorovic2013optimal,kasinathan2013h}, and for the linearized Ginzburg-Landau equation in~\cite{chen2011h,manohar2018optimal}. Other methods leverage properties inherent to linear systems, such as symmetry properties in linear \acp{PDE} in~\cite{grigoriev1997pinning}, linear system Gramians, as in~\cite{sinha2013optimal,vaidya2012actuator}, and level set methods based on Gramians that promise scalability in~\cite{amstutz2006new}. Aside from these methods which are appealing, yet constrained to linear systems, optimal actuator and sensor placement for stabilization of the nonlinear Kuramoto-Sivashinsky equation is demonstrated in~\cite{lou2003optimal}. They produce appealing results, however they impose strong simplifying assumptions which limit their dimensionality. Finally, conditions for the existence of optimal actuator and sensor placement for semilinear PDEs are obtained in~\cite{edalatzadeh2019optimal}.

This work builds the proposed framework on~\cite{Evans2019IDVRL}, wherein the authors create a semi-model-free episodic optimization framework for policy-based control of \acp{SPDE}. In contrast to recent work which first require developing deterministic \acp{ROM} and then using standard approaches from \ac{RL} or \ac{MPC}, the proposed approach treats the \ac{SPDE} system directly in Hilbert spaces, and derives a variational optimization framework for episodic reinforcement of policy networks as highlighted in red in \cref{fig:approach_diagram}.

\section{Mathematical Preliminaries} \label{sec:preliminaries}

\begin{table*}[t!]
    \captionof{table}{\label{tab:semilinear_pdes} Examples of commonly known semi-linear PDEs in a \textit{fields representation} with subscript $x$ representing partial derivative with respect to spatial dimensions and subscript $t$ representing partial derivatives with respect to time. The associated linear operators  $\A$ in the Hilbert space formulation are colored blue, while the associated nonlinear operators $F(t,X)$ in the Hilbert space formulation are colored violet.}
    \centering
    \begin{tabular}{  l  l  l }
        \textbf{Equation Name} & \textbf{Partial Differential Equation}  & \textbf{Field State} \\ 
        \hline\noalign{\smallskip}
        Heat & $u_t = {\color{blue}\epsilon u_{xx}}$ & Heat/temperature\\  
        Burgers (viscous) & $u_t = {\color{blue}\epsilon u_{xx}} - {\color{violet}u u_x}$ & Velocity\\
        Nagumo & $u_t = {\color{blue}\epsilon u_{xx}} + {\color{violet} u(1-u)(u-\alpha)}$ & Voltage\\
        Allen-Cahn & $u_t = {\color{blue}\epsilon u_{xx} } + {\color{blue}u} - {\color{violet} u^3}$ & Phase of a material \\
        Navier-Stokes & $u_t = {\color{blue}\epsilon \Delta u} - \nabla p - {\color{violet}(u \cdot \nabla)u}$ & Velocity\\ 
        Nonlinear Schrodinger & $u_t = {\color{blue}\frac{1}{2} i u_{xx}} + {\color{violet}i |u|^2 u=0}$ &  Wavefunction\\
        Korteweg-de Vries & $u_t = - {\color{blue} u_{xxxx}} - {\color{violet} 6uu_x} $ & Plasma wave \\
        Kuramoto-Sivashinsky & $u_t =  -{\color{blue}  u_{xx}}- {\color{blue}  u_{xxxx}} -{\color{violet}uu_x}$ & Flame front \\
    \end{tabular}
\end{table*}

The proposed approach applies principles of measure-theoretic optimization for policy and actuator co-design optimization for a large class of stochastic spatio-temporal systems represented as \acp{SPDE}. These systems can be conveniently described as evolving on time separable Hilbert spaces, where they are represented by infinite dimensional vectors and acted on by operators. Consider the class of \acp{SPDE} that are of \textit{semi-linear} form. Similar mathematical preliminaries are treated in our prior work \cite{evans2020spatio}. Let $\calH$ denote a separable Hilbert space with inner product $\langle \cdot, \cdot \rangle$,  $\sigma$-field $\calB(\calH)$, and probability space $(\Omega, \calF, \Pb)$ with filtration $\calF_t$, $t\in [0,T]$. Consider the general semi-linear form of a controlled \ac{SPDE} on $\calH$ given by 
\begin{align}\label{eq:SPDEs_Control}
\rd X = \big( \A X   + &F(t, X)  \big) \rd t + G(t, X)\big(\vPhi(t,X,\vx; \vTheta^{(k)})\rd t+ \frac{1}{\sqrt{\rho}} \rd  W(t)\big), 
\end{align}
where $X(t) \in \calH$ is the state of the system which evolves on the Hilbert space $\calH$, the linear and nonlinear measurable operators $\A: \calH \rightarrow \calH$ and $F(t,X): \Rb \times \calH \rightarrow \calH$  (resp.) are uncontrolled drift terms, $\vPhi(t,X,\vx; \vTheta^{(k)}):\Rb \times \calH \times \calD \rightarrow \calH$ is the nonlinear control policy parameterized by $\vTheta^{(k)}$ at the $k^{th}$ iteration, where $\calD \subset \Rb^3$ is the domain of the finite spatial region, $\rd W(t):\Rb \rightarrow \calH$  is a spatio-temporal noise process, and $G(t,X)$ is a nonlinearity that affects both the noise and the control and acts to incorporate each into the field. 

Despite appearing to limit the applicability of this approach, the semilinear classification covers a vast majority of natural systems governed by \acp{SPDE}. Several examples of semilinear \acp{PDE} can be found in \cref{tab:semilinear_pdes}. Despite being nonlinear from a dynamics perspective, each of these are classified as semilinear \acp{PDE}, which are simply \acp{PDE} which can be nonlinear in any order derivative of the state except the highest order derivative, which they must have a linear relationship with \cite{lototsky2017stochastic}. Linearity in the highest order derivative allows one to form a contractive, unitary, linear semigroup which is often used to guarantee existence and uniqueness of $\calF_t$-adapted weak solutions $X(t)$, $t \geq 0$. The interested reader can refer to \cite[Chapter 1]{fabbri} or \cite[Part II]{da2014stochastic} for a more complete description.

The \textit{Hilbert spaces} formulation given in \cref{eq:SPDEs_Control} is general in that \textit{any} semi-linear \ac{SPDE} can be described in this form by appropriately choosing the linear $\A$ and nonlinear $F$ operators. In this form, the spatio-temoral noise process $\rd W(t)$ is a Hilbert space-valued Wiener process, which is a generalization of the Wiener process in finite dimensions. We include a formal definition of a Wiener process in Hilbert spaces for clarity \cite[Section 4.1.1]{da2014stochastic}
\begin{definition} A $\calH$-valued stochastic process $W(t)$ with probability law $\calL\big(W(\cdot)\big)$ is called a Wiener process if
\begin{enumerate}[nosep]
    \item $W(0) = 0$
    \item $W$ has continuous trajectories
    \item $W$ has independent increments
    \item $\calL\big(W(t) - W(s)\big) = \calN (0, t-s)Q), \quad t \geq s \geq 0$
    \item $\calL\big(W(t)\big) = \calL\big(-W(t)\big), \quad t \geq 0$
\end{enumerate}
\end{definition}  \vspace{0.25cm}

\begin{proposition}
Let $ \{e_{i}\}_{i=1}^{\infty} $  be a complete orthonormal system for the Hilbert Space $\calH$. Let $Q$ denote the covariance operator of the Wiener process $W(t)$. Note that $Q$ satisfies $Q e_{i}= \lambda_{i} e_{i}$, where $\lambda_{i}$ is the eigenvalue of $Q$ that corresponds to eigenvector $e_{i}$. Then, $W(t) \in \calH$ has the following expansion:
 \begin{equation}\label{eq:Wiener_expansion}
    W(t) = \sum_{j=1}^{\infty} \sqrt{\lambda_{j}} \beta_{j}(t) e_{j},
\end{equation}
\noindent where  $ \beta_{j}(t)  $  are real valued Brownian motions that are mutually independent on $ (\Omega, \calF, \mathbb{P})$.
\end{proposition}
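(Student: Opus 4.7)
The plan is to construct the claimed expansion directly by projecting $W(t)$ onto the orthonormal basis $\{e_i\}$ and then verifying that the resulting scalar processes have the required properties. Since $\{e_i\}$ is complete, every element of $\calH$, and in particular $W(t)$ for each fixed $t$, admits the Fourier representation $W(t) = \sum_{i=1}^\infty \langle W(t), e_i\rangle e_i$. The natural candidate for the $j$th Brownian motion is therefore $\beta_j(t) := \lambda_j^{-1/2}\langle W(t), e_j\rangle$ on the set where $\lambda_j > 0$ (terms with $\lambda_j = 0$ contribute nothing to either side and can be discarded).

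Next I would verify that each $\beta_j(t)$ is a standard real Brownian motion by checking the defining properties using the five axioms from the preceding definition. Continuity of trajectories and $\beta_j(0)=0$ are immediate from the corresponding properties of $W$ together with continuity of the linear functional $\langle \cdot, e_j\rangle$. Independent increments for $\beta_j$ follow from independent increments of $W$. Gaussianity of $\beta_j(t)-\beta_j(s)$ follows from axiom (iv), which gives $\calL(W(t)-W(s))=\calN(0,(t-s)Q)$; projecting onto $e_j$ yields a centered Gaussian with variance
\begin{equation*}
\lambda_j^{-1}\,\langle (t-s)Q e_j, e_j\rangle = \lambda_j^{-1}(t-s)\lambda_j = t-s,
\end{equation*}
as required.

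For mutual independence of $\{\beta_j\}$, I would compute the cross-covariance and exploit Gaussianity. For $s\le t$ and $i\neq j$, splitting $W(t)=W(s)+(W(t)-W(s))$ and using independent increments together with axiom (iv) gives
\begin{equation*}
\Eb\bigl[\langle W(t),e_i\rangle\langle W(s),e_j\rangle\bigr] = \Eb\bigl[\langle W(s),e_i\rangle\langle W(s),e_j\rangle\bigr] = s\,\langle Qe_i,e_j\rangle = s\lambda_i\delta_{ij}=0.
\end{equation*}
Since the joint law of any finite collection $(\beta_{j_1}(t_1),\ldots,\beta_{j_n}(t_n))$ is Gaussian (as a linear image of the Gaussian vector-valued increments of $W$) and the coordinates are pairwise uncorrelated across different $j$'s, they are in fact mutually independent.

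The main obstacle I expect is justifying that the series in \cref{eq:Wiener_expansion} converges in the appropriate sense, i.e.\ in $L^2(\Omega;\calH)$ and almost surely in $C([0,T];\calH)$, rather than just pointwise in each coordinate. This step genuinely uses that $Q$ is trace class, so that $\sum_j \lambda_j < \infty$; one then shows that the partial sums $S_N(t)=\sum_{j=1}^N \sqrt{\lambda_j}\beta_j(t)e_j$ form a Cauchy sequence in $L^2(\Omega;\calH)$ via Parseval, since by independence $\Eb\|S_N(t)-S_M(t)\|_\calH^2 = t\sum_{j=M+1}^N \lambda_j \to 0$, and upgrades this to uniform convergence on $[0,T]$ by a Doob-type maximal inequality applied to the martingale partial sums. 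Once convergence is established, identifying the limit with $W(t)$ is immediate from the Fourier expansion noted at the outset.
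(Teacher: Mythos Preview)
Your argument is the standard textbook proof and is correct. However, the paper does not actually supply a proof of this proposition: it is stated without proof in the preliminaries section as a known structural fact about $\calH$-valued Wiener processes, with the reader implicitly referred to the cited monograph of Da Prato and Zabczyk for details. So there is no ``paper's own proof'' to compare against; your write-up would serve perfectly well as the missing justification.

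One remark worth making: you correctly flag that convergence of the series in $\calH$ hinges on $Q$ being trace class. The paper is aware of this and explicitly notes, immediately after the proposition, that for the cylindrical Wiener process (where $Q=I$ and all $\lambda_j=1$) the sum in \cref{eq:Wiener_expansion} is unbounded in $\calH$. So the proposition as stated is really a formal expansion, and your caveat about needing $\sum_j \lambda_j < \infty$ for genuine $L^2(\Omega;\calH)$ convergence is exactly the right qualification---indeed it is sharper than what the paper states, since the paper presents the expansion first and only afterwards warns that it can fail to converge.
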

The expansion in \cref{eq:Wiener_expansion} reveals how the Wiener process acts spatially for a given basis. There are various forms of the Wiener process with different properties. We refer the interested reader to~\cite{da2014stochastic} for a more complete introduction. The proposed approach is derived for a special case of Wiener process called the Cylindrical Wiener process, defined as follows.
\begin{definition}
A Wiener process $W(t)$ on $\calH$ is called a Cylindrical Wiener process if the covariance operator $Q$ is the identity operator $I$.
\end{definition}

Note that for the Cylindrical Wiener process, the sum in \cref{eq:Wiener_expansion} is unbounded in $\calH$ since $\lambda_j = 1$, $\forall j=1,2,\dots$. 
This makes the Cylindrical Wiener process a challenging Wiener process to handle since it acts spatially \textit{everywhere} with \textit{equal} magnitude, in contrast to Wiener processes with covariance operators that are of trace class (i.e. wherein the expansion in \cref{eq:Wiener_expansion} is finite). This type of spatio-temporal noise requires the assumption that the operators $\A$, $F(t, X)$, and $G(t,X)$ satisfy properly formulated conditions given in \cite[Hypothesis 7.2]{da2014stochastic} to guarantee the existence and uniqueness of the $\calF_t$-adapted weak solution $X(t)$, $t \geq 0$.

\section{Second Order Soft-Robotic SPDEs in Direct Product Hilbert Spaces} \label{sec:second_order}
Many complex spatio-temporal systems are given by stochastic partial differential equations of second order in time. Second order \acp{SPDE} typically have behavior analogous to second order mechanical \ac{SDE} systems derived from Newtonian mechanics, in which the actuation acts as external forces or torques which enter through the derivative of the respective linear and rotational momenta. These are typically treated by defining a new momentum state, and writing the system in matrix-vector form. If one takes a robot arm constrained to one dimension, defined by a scalar second order \ac{SDE}, and repeatedly adds joints, and thereby degrees of freedom, one would obtain a one-dimensional continuum robot manipulator in the limit, which has infinite degrees of freedom and is described by a suitable one-dimensional second-order \ac{SPDE}. 

\subsection{The Euler-Bernoulli Continuum System}

One such system description is achieved in the simply supported stochastic Euler-Bernoulli equation with Kelvin-Voigt and viscous damping, which is a simplified model of a soft robotic limb. The Euler-Bernoulli equation is used extensively in beam theory, and has applications to a variety of robotic systems beyond soft robotics. Formally, the \ac{1D} Euler-Bernoulli equation with Kelvin-Voigt and viscous damping is given in \textit{fields representation} by
\begin{equation} \label{eq:EB_SPDE}
\begin{split}
&\partial_{tt} y + \partial_{xx} \big( \partial_{xx}y + C_d \partial_{xxt} y \big) + \mu \partial_t y = \vPhi + \frac{1}{\sqrt{\rho}} \partial_t W(t,x), \\
&y(t,0) = y(t,a) = 0, \\
&y(0,x) = y_0, \\
&\partial_t y(0,x) = v_0 , \\
&\partial_{xx}y(t,0) + C_d \partial_{xxt}y(t,0) = 0, \\
&\partial_{xx}y(t,a) + C_d \partial_{xxt} y(t,a) = 0,
\end{split}
\end{equation}
where $y(t,x)=y: D \times \Rb \rightarrow \Rb$ represents the vertical displacement of the beam, $C_d$ represents the Kelvin-Voigt damping coefficient, $\mu$ represents the viscous damping coefficient, and all functional dependencies of the nonlinear policy $\vPhi$ have been dropped since it has a different form in the fields representation. Note that the policy and stochastic effects enter as forces. With the change of variables $v := \partial_t y $, this system has the typical second order matrix-vector form
\begin{align} \label{eq:EB_matrix_form}
     \partial_t \left[\begin{array}{c} y \\ v \end{array} \right] &= \left[\begin{array}{cc} 0 & 1 \\ -A_0 & -C_d A_0 - \mu \end{array} \right] \left[ \begin{array}{c} y \\ v \end{array} \right ] + \left[ \begin{array}{c} 0 \\ 1 \end{array} \right] \Phi + \left[ \begin{array}{c} 0 \\ \frac{1}{\sqrt{\rho}} \end{array} \right] \partial_t W(t,x),
\end{align}
where $A_0 = \partial_{xxxx}$ without boundary conditions. Now, we lift this \ac{SPDE} into infinite dimensional Hilbert spaces. Define $Y \in \calH$ as the Hilbert space analog of $y(t,x)$, $V \in \calH$ as the Hilbert space analog of $v(t,x)$, and a variable $Z$ on the direct product Hilbert space $\calH^2 := \calH \times \calH$. Note that $Z$ is a Hilbert space analog of a variable $z(t,x) = [y(t,x) \;\; v(t,x)]^\top \in \Rb^2$. In Hilbert spaces, $A_0$ becomes an operator acting on $\calH$ and $1$ gets replaced by the identity operator $I$ acting on $\calH$. Rewriting \cref{eq:EB_matrix_form} in Hilbert space semi-linear form yields
\begin{equation} \label{eq:EB_Hilbert_form}
    \rd Z = \A Z \rd t + G\Big( \vPhi(t,Z,\vx; \vTheta^{(k)})\rd t + \frac{1}{\sqrt{\rho}}\rd W(t) \Big),
\end{equation}
where $\A: \calH^2 \rightarrow \calH^2$ is the linear operator $\A = [0 \;\; I; -A_0 \;\; -C_d A_0 - \mu I]$, $G: \calH \rightarrow \calH^2$ is an operator representing how control and spatio-temporal noise enter the system $G = [0; I]$, and $\rd W(t)$ is a Cylindrical Wiener process on $\calH$. Note that the Hilbert space variables $Y$, $V$, and $Z$ no longer have spatial dependence as the Hilbert space vectors capture the spatial continuum over which the problem is defined.


\subsection{Detailed Models of Soft-Robotic Limbs}

While the Euler-Bernoulli \ac{SPDE} has wide applicability, it relies on a small-angle assumption, which is not suitable for some soft-robotic applications such as in soft-robotic manipulators or end-effectors. In \cite{shapiro2015modeling} the errors introduced by violation of this approximation have been reduced significantly by parametrizing the beam's backbone by a tangent angle and including a single-parameter hysteretic term. The resulting model is used for a hyper-flexible system.

However, the majority of modern soft-body robotics modeling research typically deviates from the Euler-Bernoulli approach. The majority of modern models of soft-robotic systems are divided into two main categories: constant curvature approximation, and non-constant curvature approximation \cite{renda2014dynamic}. For a detailed review of constant and piecewise constant curvature methods as of 2010, refer to \cite{webster2010design}. More recent constant curvature methods include \cite{rone2013continuum}, wherein the authors use the principle of virtual power to derive a model with constant curvature segments and discrete torsional joints. In \cite{godage2016dynamics}, the authors start with a piecewise-constant curvature approximation, and produce a model that they then validate against a piece-wise constant curvature robotic manipulator. In \cite{falkenhahn2015dynamic}, the authors apply a peicewise constant curvature model to a continuum robotic manipulator actuated by pressure differentials provided by bellows. Constant and piecewise constant curvature models are often much simpler in implementation, yet these models can fail when the system does not have a uniform shape or is acted on by a large external load. The interested reader can refer to \cite{george2018control} for a recent review of control methodologies on constant and piecewise-constant curvature models of continuum manipulators.

Non-constant curvature models are increasing in popularity due to a typically more accurate representation of a continuum system. They are typically broken into three subcategories: continuum approximations of hyperredundant models, spring-mass models, and cosserat or geometrically exact models. The continuum approximation of hyperredundant models were among the first proposed continuum methods \cite{chirikjian1994hyper}, and led to several interesting applications \cite{hannan2003kinematics,mochiyama2005hyper,chirikjian1995kinematics}. On the other end of the spectrum, Cosserat models are currently the most exact models of continuum systems. Derived from the context of beam theory, these geometrically exact models often have a large number of \ac{PDE} states, making them quite difficult to simulate at high frequency. Yet, their high fidelity has made them an appealing research direction. Cosserat models have also been simulated in real-time for very slender rods with uniform cross-section in \cite{till2015efficient,till2019real}. In \cite{renda2014dynamic}, the authors develop a geometrically exact \ac{3D} model on Lie groups of a tentacle-like tapered soft robot arm actuated by cables, resulting in a \ac{PDE} with $18$ states. Similar models are also developed and validated in \cite{trivedi2008geometrically}. In addition to introducing significant model complexity, Cosserat \acp{PDE} are also known to suffer from \textit{stiff} dynamics with respect to the Courant-Friedrichs-Lewy stability condition~\cite{till2019real}. For a complete review of design, fabrication and control strategies that sweep across the discipline of soft robotics, the interested reader should refer to \cite{rus2015design}.

In between these two extremes are the set of continuum spring-mass models. These models often emerge in models of biological systems, such as the appendages of the octopus vulgaris \cite{zheng2012dynamic,yekutieli2005dynamic,walker2005continuum}. Their features include accurate, volume preserving representations of muscular forces and lower \ac{PDE} state dimensionality compared to Cosserat models. In \cite{etzmuss2003deriving}, the authors derive a particle-based model that falls into the category of spring-mass models, and they establish a link between such particle systems and continuum mechanics. In this work we consider a stochastic variant of their model, actuated by an actuation function modeled after muscular behavior common to cephalopods \cite{walker2005continuum}.

The \ac{SPDE} governing the dynamics of a continuum elastic material is given by
\begin{equation} \label{eq:continuumPDE}
    \rho_m \partial_{tt} \vs = \text{div}(\sigma) + \vf_g + \vPhi + \frac{1}{\sqrt{\rho}} \partial_t W(t),
\end{equation}
where $\rho_m$ is the material density, $\vs$ is the deformed state, $\sigma$ is the stress tensor, $\vf_g$ is the force of gravity, and $\vPhi$ is the nonlinear policy.
The material state $\vs$ can be expressed as the sum of an initial rest state $\vr$ and deformation $\vd$, each of which are parameterized over \ac{2D} domain $\calD = X \times Y$.
\begin{equation} \label{eq:deformation}
    \vs(t,x,y)=\vr(x,y)+\vd(t,x,y).
\end{equation}

The total stress tensor $\sigma$ in \cref{eq:continuumPDE} is the sum of the elastic and viscous stresses,
\begin{equation} \label{eq:sum_stresses}
   \sigma=\sigma^\epsilon+\sigma^\nu.
\end{equation}
Assuming linear elasticity, the elastic stress tensor $\sigma^\epsilon$ may be related to the strain tensor $\epsilon$ via the the stiffness tensor $C$.
\begin{equation} \label{eq:linearstresstensor}
    (\sigma^\epsilon)_{ij}=(C)_{ijkl}(\epsilon)_{kl}
\end{equation}
where subscript of a paranthesis $(A)_{ijkl}$ denotes tensor element $i,j,k,l$ of $A$, and Einstein summation notation is utilized to perform tensor contractions. Strains, denoted $\epsilon$, within the material are determined by Green's strain tensor, where subscripts of $s$ indicate partial derivatives with respect to coordinates $x$ or $y$.
\begin{equation} \label{eq:greenstrain}
    \epsilon=
    \begin{pmatrix}
       \Vert \vs_x \Vert^2 -1 & \frac{1}{2} \langle\vs_x,\vs_y\rangle  \\ 
       \frac{1}{2}  \langle\vs_x,\vs_y\rangle  & \Vert \vs_x \Vert^2 -1
    \end{pmatrix}.
\end{equation}
 For isotropic materials, entries of the stiffness tensor $C$ are determined by tensile constant $k$ and shear modulus $\mu$
\begin{equation} \label{eq:stiffnesstensor}
    (C)_{iiii}=k, \quad (C)_{ijij}=(C)_{jiij}=\frac{1}{2}\mu, \quad i\neq j.
\end{equation}

Dissipative effects can be modelled by Kelvin-Voigt damping, which adds a viscous stress $\sigma^\nu$ proportional to the strain rate $\nu$.
\begin{align} 
    (\nu)_{ij} &= \frac{\rd (\epsilon)_{ij}}{\rd t} \label{eq:strainratetensor} \\
    (\sigma^\nu)_{ij} &= (D)_{ijkl}(\nu)_{kl} \label{eq:visousstress}
\end{align}
The damping tensor $D$ is proportional to the stiffness tensor $C$ by a retardation time constant $\tau$.
\begin{equation} \label{eq:kelvinvoigt}
    D=\tau C
\end{equation}

In this case, the resulting \ac{SPDE} can be again lifted into Hilbert spaces in a similar fashion as in \cref{eq:EB_SPDE}. Define displacement velocity $\vv := \partial_t \vs = \partial_t \vd$, and rewrite \cref{eq:continuumPDE} in typical second order matrix-vector form
\begin{align} \label{eq:SM_matrix_form}
     \partial_t \left[\begin{array}{c} \vd \\ \vv \end{array} \right] &= \left[\begin{array}{cc} 0 & 1 \\ 0 & 0 \end{array} \right] \left[ \begin{array}{c} \vd \\ \vv \end{array} \right]  + \left[ \begin{array}{c} 0 \\ \frac{\text{div}(\sigma) + \vf_g}{\rho_m} \end{array} \right]  + \left[ \begin{array}{c} 0 \\ \frac{1}{\rho_m} \end{array} \right] \Phi + \left[ \begin{array}{c} 0 \\ \frac{1}{\rho_m \sqrt{\rho}} \end{array} \right] \partial_t W(t,x).
\end{align}
We again lift this \ac{SPDE} into infinite dimensional Hilbert spaces. Note that $\vd = \vd(t,x,y)$ and $\vv=\vv(t,x,y)$ have $x$ and $y$ components that are each defined over \ac{2D} spatial domain $\calD= X\times Y$. Define $\calW \in \calH \times \calH$ as the Hilbert space analog of $\vd(t,x,y)$, $\calV \in \calH \times \calH$ the Hilbert space analog of $\vv(t,x,y)$, and a variable $Z$ on the direct product Hilbert space $\calH^4 = \calH \times \calH \times \calH \times \calH$. This new variable $Z$ is a Hilbert space analog of a variable $\vz(t,x,y) = [\vd(t,x,y)\;\; \vv(t,x,y)]^\top \in \Rb^4$. Rewriting \cref{eq:SM_matrix_form} in Hilbert space semi-linear form yields
\begin{equation} \label{eq:SM_Hilbert_form}
    \rd Z = \A Z \rd t + F(Z) \rd t +  G\Big( \vPhi(t,Z,\vx; \vTheta^{(k)})\rd t + \frac{1}{\sqrt{\rho}}\rd W(t) \Big)
\end{equation}
where $\A: \calH^4 \rightarrow \calH^4$ is a linear operator, $F$ is the nonlinear operator which contains the forces due to stresses and gravity, $G:\calH^2 \rightarrow \calH^4$ is an operator representing how the $\calH^2$-valued control policy and spatio-temporal noise enter the system, and $\rd W(t)$ is a Cylindrical Wiener process on $\calH^2$. Again, the Hilbert space variables lose spatial dependence as they represent the entire spatial continuum.


\section{Girsanov Theorem for Second Order SPDEs} \label{sec:girsanov}

The proposed approach is derived from the perspective of a measure theoretic view of variational optimization, wherein the change of measures, or \ac{RN} derivative is a tool that is widely leveraged to change the sampling distribution of an expectation. Thus, such a framework requires a properly formulated Girsanov theorem for second order \acp{SPDE} defined on time-indexed Hilbert spaces. This was first presented in our prior work \cite{evans2020spatio}, and is repeated here for clarity. 

\begin{theorem}[Girsanov] \label{girs} Let $\Omega$ be a sample space with a $\sigma$-algebra $\mathcal{F}$. Consider the following $\calH^2$-valued (or similarly $\calH^4$-valued) nonlinear stochastic processes
\begin{align}
\rd Z&=\big(\A Z+F(t, Z)\big) \rd t +  \frac{1}{\sqrt{\rho}}G(t, Z)\rd W(t), \label{eq:Z}\\
\rd\tilde{Z}&=\big(\A \tilde{Z}+F(t, \tilde{Z})\big)\rd t+G(t, \tilde{Z})\bigg(B(t, \tilde{Z})\rd t + \frac{1}{\sqrt{\rho}} \rd W(t)\bigg),\label{eq:Z_tilde}
\end{align}
where $Z(0)=\tilde{Z}(0)=z_0$ and $W \in \calH$ (or similarly $W \in \calH^2$) is a Cylindrical Wiener process with respect to measure $\mathbb{P}$. Moreover, let $\Gamma$ be a set of continuous-time, infinite-dimensional trajectories in the time interval $[0,T]$. Define the {\it probability law} of $Z$ over trajectories $\Gamma$ as $\mathcal{L}(\Gamma):=\mathbb{P}(\omega\in\Omega|Z(\cdot,\omega)\in\Gamma)$. Similarly, define the law of $\tilde{Z}$ as $\tilde{\calL}(\Gamma):=\mathbb{P}(\omega\in\Omega|\tilde{Z}(\cdot,\omega)\in\Gamma)$. Assume 
\begin{equation}
    \mathbb{E}_{\mathbb{P}}\big[e^{\frac{1}{2}\int_{0}^{T}||\psi(t)||^2\mathrm dt}\big]<+\infty,
\end{equation}
where
\begin{equation}
    \psi(t):=\sqrt{\rho}\tilde{B}\big(t, Z(t)\big)\in \calH.
\end{equation}
Then
\begin{equation} \label{eq:Lg}
\begin{split}
\tilde{\calL}(\Gamma) =  \mathbb{E}_{\mathbb{P}}\Bigg[ \exp\bigg(&\int_{0}^{T} \big\langle\psi(s),\rd W(s)\big\rangle-\frac{1}{2}\int_{0}^{T}\big|\big|\psi(s)\big|\big|^{2}\rd s\bigg) \Bigg| X(\cdot)\in\Gamma\Bigg],
\end{split}
\end{equation}

\end{theorem}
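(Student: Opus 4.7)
The plan is to follow the standard Girsanov recipe, but transported to the direct product Hilbert space $\calH^2$ (resp.\ $\calH^4$) and adapted to a Cylindrical Wiener process. First I would introduce the candidate Radon--Nikodym density
\begin{equation*}
M(t) \;:=\; \exp\!\Bigg(\int_{0}^{t}\langle \psi(s), \rd W(s)\rangle \;-\; \tfrac{1}{2}\int_{0}^{t}\|\psi(s)\|^{2}\rd s\Bigg), \qquad t\in[0,T],
\end{equation*}
where $\psi(s)=\sqrt{\rho}\,B(s,Z(s))$ is viewed as a square-integrable $\calH$-valued (resp.\ $\calH^{2}$-valued) adapted process under $\Pb$. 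A formal It\^o computation in Hilbert spaces gives $\rd M(t) = M(t)\langle \psi(t),\rd W(t)\rangle$, so $M$ is at least a nonnegative local martingale.

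Next I would invoke Novikov's criterion to upgrade $M$ to a genuine martingale. This is exactly where the standing assumption $\Eb_{\Pb}[\exp(\tfrac{1}{2}\int_{0}^{T}\|\psi\|^{2}\rd t)]<\infty$ is used; in the Hilbert-space setting the Novikov argument proceeds essentially as in finite dimensions because $\psi$ takes values in $\calH$ and the stochastic integral $\int\langle \psi,\rd W\rangle$ is a real-valued continuous local martingale with quadratic variation $\int\|\psi\|^2\rd s$. With $M$ established as a $\Pb$-martingale of unit mean, define a new probability measure $\Qb$ on $(\Omega,\calF_T)$ by $\rd\Qb/\rd\Pb = M(T)$.

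Then I would apply the Cameron--Martin--Girsanov theorem for Cylindrical Wiener processes (see e.g.\ \cite[Chapter~10]{da2014stochastic}) to conclude that
\begin{equation*}
\tilde{W}(t) \;:=\; W(t) \;-\; \int_{0}^{t} \psi(s)\,\rd s
\end{equation*}
is a Cylindrical Wiener process on $\calH$ under $\Qb$. Plugging $\rd W = \rd \tilde{W} + \sqrt{\rho}\,B(t,Z)\rd t$ into the SPDE \cref{eq:Z} for $Z$ and distributing $G(t,Z)/\sqrt{\rho}$ shows that, under $\Qb$, $Z$ satisfies exactly \cref{eq:Z_tilde} driven by $\tilde W$. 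Since $Z(0)=\tilde Z(0)=z_0$ and the semilinear hypotheses ensure uniqueness in law of $\calF_t$-adapted weak solutions to \cref{eq:Z_tilde}, the law of $Z$ under $\Qb$ coincides with $\tilde{\calL}$. Translating back to $\Pb$ via the density, $\tilde{\calL}(\Gamma)=\Qb\{Z(\cdot)\in\Gamma\}=\Eb_{\Pb}[M(T)\mathbf{1}_{\{Z(\cdot)\in\Gamma\}}]$, which is exactly \cref{eq:Lg}.

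The main obstacle is the infinite-dimensional and cylindrical nature of the noise: because $Q=I$ is not trace class, $W(t)$ does not live in $\calH$ and the stochastic integral must be defined on the reproducing Hilbert-Schmidt space, so one has to check that $G(t,Z)\psi$ and $G(t,Z)$ itself are Hilbert--Schmidt operators on the appropriate domain and that Novikov's condition is genuinely sufficient in this weak-solution setting. A secondary, but essentially notational, difficulty is that the state space is the product $\calH^{2}$ (or $\calH^{4}$), and the operator $\A$ only generates a $C_{0}$-semigroup rather than being bounded, so the mild/variational formulation of \cref{eq:Z,eq:Z_tilde} must be used throughout; however, since $\A$ does not interact with the Girsanov drift change, the standard argument transfers verbatim once the Hilbert-Schmidt conditions on $G$ and $B$ are checked.
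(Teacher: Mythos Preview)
Your proposal is correct and follows essentially the same route as the paper: define the exponential density, invoke Novikov together with the infinite-dimensional Girsanov theorem (the paper cites \cite[Theorem~10.14]{da2014stochastic} for this step) to obtain that the shifted process $\hat{W}(t)=W(t)-\int_0^t\psi(s)\rd s$ is a Cylindrical Wiener process under $\Qb$, rewrite \cref{eq:Z} in terms of $\hat{W}$ to match \cref{eq:Z_tilde}, and conclude via uniqueness of weak solutions that $\Pb(\tilde{Z}\in\Gamma)=\Qb(Z\in\Gamma)$. Your additional remarks on Hilbert--Schmidt conditions and the mild-solution formulation are valid technical caveats that the paper leaves implicit.
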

\begin{proof}
Define the process
\begin{equation}
\label{eq:w_hat}
\hat{W}(t):= W(t)-\int_{0}^{t}\psi(s)\rd s.
\end{equation}
Under the above assumption, $\hat{W}$ is a Cylindrical Wiener process with respect to a measure $\Qb$ defined by
\begin{equation}
\label{eq:girsanov_measure}
\begin{split}
\rd \Qb (\omega)&=\exp\bigg(\int_{0}^{T}\big\langle\psi(s),\rd W(s)\big\rangle-\frac{1}{2}\int_{0}^{T}\big|\big|\psi(s)\big|\big|^{2}\rd s\bigg)\rd\mathbb{P} \\ &=\exp\bigg(\int_{0}^{T}\big\langle\psi(s),\rd \hat{W}(s)\big\rangle+\frac{1}{2}\int_{0}^{T}\big|\big|\psi(s)\big|\big|^{2}\rd s\big)\rd\mathbb{P}.
\end{split}
\end{equation}
The proof that $\hat{W}$ is a Cylindrical Wiener process with respect to $\Qb$ can be found in \cite[Theorem 10.14]{da2014stochastic}. Now, using \cref{eq:w_hat}, \cref{eq:Z} is rewritten as
\begin{align}
\rd Z &= \big(\A  X+F(t, Z)\big)\rd t+\frac{1}{\sqrt{\rho}}G(t, Z)\rd W(t) \label{eq:Z_new0}  \\
     &=\big(\A  Z+F(t, Z)\big)\rd t+G(t, Z)\bigg(B(t,Z)\rd t+\frac{1}{\sqrt{\rho}}\rd\hat{W}(t) \bigg) \label{eq:Z_new1}
\end{align}
Notice that  the  SPDE in \cref{eq:Z_new1} has the same form as \cref{eq:Z_tilde}. Therefore, under the introduced measure $\Qb$ and noise profile $\hat{W}$, $Z(\cdot, \omega)$ becomes equivalent to $\tilde{Z}(\cdot, \omega)$. Conversely, under measure $\mathbb{P}$, \cref{eq:Z_new0} (or \cref{eq:Z_new1}) behaves as the original system in \cref{eq:Z}. In other words, \cref{eq:Z} and \cref{eq:Z_new1} describe the same system on $(\Omega, \mathcal{F}, \mathbb{P})$. From the uniqueness of solutions and the aforementioned reasoning, one has
\[\Pb\big(\{\tilde{Z}\in\Gamma\}\big) = \Qb\big(\{Z\in\Gamma\}\big).\]
The result follows from \cref{eq:girsanov_measure}. $\qed$
\end{proof}

The notion most pertinent to the subsequent derivation is the change of measures or \ac{RN} derivative between the associated measures of the uncontrolled and controlled systems defined in \cref{eq:Z} and \cref{eq:Z_tilde}, respectively, and is given by
\begin{equation} \label{eq:RN}
\begin{split}
\frac{\rd \calL}{\rd \tilde{\calL}} = \exp\bigg(-\int_{0}^{T}\big\langle\psi(s), \rd W(s)\big\rangle-\frac{1}{2}\int_{0}^{T}||\psi(s)||^{2}\rd s\bigg).
\end{split}
\end{equation}
In the case of semilinear \acp{SPDE} of the form \cref{eq:EB_Hilbert_form} and similarly any semilinear \acp{SPDE} in the general form \cref{eq:SPDEs_Control}, the function $\psi$ which defines this RN derivative is given by
\begin{equation}
    \psi(t):=\sqrt{\rho}\vPhi(t,Z, \vx; \vTheta^{(k)})
\end{equation}
which simplifies the RN derivative to
\begin{equation} \label{eq:RN_semi_linear}
\begin{split}
\frac{\rd \calL}{\rd\tilde{\calL}} = \exp\bigg(&-\sqrt{\rho}\int_{0}^{T}\big\langle\vPhi(t,Z,\vx; \vTheta^{(k)}), \rd W(s)\big\rangle -\frac{\rho}{2}\int_{0}^{T}\big|\big|\vPhi(t,Z,\vx; \vTheta^{(k)})\big|\big|^{2}\rd s\bigg).
\end{split}
\end{equation}
For convenience, we assign functions to each term in \cref{eq:RN_semi_linear}
\begin{align}
    \calN(\vTheta, \vx) &:= \int_{0}^{T}\big\langle\vPhi(t,Z,\vx; \vTheta^{(k)}), \rd W(s)\big\rangle \\
    \calP(\vTheta, \vx) &:= \int_{0}^{T}\big|\big|\vPhi(t,Z,\vx; \vTheta^{(k)})\big|\big|^{2}\rd s
\end{align}

\section{Spatio-Temporal Stochastic Optimization} \label{sec:stso}

The proposed measure theoretic framework was first derived in \cite{Evans2019IDVRL} for the simpler case of policy optimization without co-design optimization. In \cite{evans2020spatio} this work was extended to policy and actuator co-design optimization. These frameworks are explicit feedback formulation of the feedforward and \ac{MPC} formulations given in \cite{boutselis2019variational}. The explicit feedback is realized through the nonlinear policy $\vPhi(t,X,\vx; \vTheta^{(k)})$, which is a potentially time-varying policy that has explicit state dependence. 

Nonlinear, explicit state dependence allows for a feedback policy that can extract pertinent information from the state for control, and is in a sense reactive to undesired evolutions of the state. Policy networks have had widespread success in extracting pertinent features in a multitude of systems, and are utilized here for the nonlinear policy. Embedded in this function is also a dependence on $\vx$, which describes how the actuator may depend on some design variables, such as actuator placement in the spatial domain. This approach also encompasses cases where terms that parametrize how the actuators are shaped or sized are included in the nonlinear policy.

As discussed in \cref{sec:intro}, the proposed framework is based on an instantiation of the second law of thermodynamics given in \cref{eq:Free_Energy_Relative_Entropy} in the following form~\cite{theodorou2015nonlinear,theodorou2018linearly}
\begin{align} \label{eq:Legendre}
  & - \frac{1}{\rho}  \log \Eb_{\calL} \bigg[ \exp( -\rho {J} )  \bigg]  = \min_{\vTheta, \vx} \bigg[    \Eb_{\tilde{\calL}}\left({J} \right)  + \frac{1}{\rho} D_{KL} ( \tilde{\calL}\; \big|\big|  \calL )  \bigg],
\end{align}
where $J=J(X)$ is an arbitrary state cost functional. Relating \cref{eq:Legendre} to \cref{eq:Free_Energy_Relative_Entropy}, the metaphorical work and entropy describe a metaphorical energy landscape for which there is a minimizing measure. Sampling from this measure would simultaneously minimize state cost and the $KL$-divergence term, which is interpreted as control effort. The measure that optimizes \cref{eq:Legendre} is the so-called Gibbs measure
\begin{equation}\label{eq:Gibbs}
\rd \calL^{*} = \frac{\exp( - \rho J) \rd \calL}{\Eb_\calL \big[\exp( - \rho J) \big] }.
\end{equation}

The significance of \cref{eq:Legendre} from the perspective of optimal control theory lies in established connections between \cref{eq:Legendre} and the \ac{HJB} equation in infinite dimensions, as shown in~\cite{theodorou2018linearly}. This connection to a foundational principle in optimal control literature motivates the use of \cref{eq:Legendre} and the resulting optimal measure in \cref{eq:Gibbs} for the derivation of the proposed measure-theoretic optimization strategy.

It is not known how to sample directly from the Gibbs measure in \cref{eq:Gibbs}. Instead, variational optimization methods are often used to iteratively minimize the controlled distribution's "distance" \footnote{Distance here is defined in the Kullback-Liebler (KL) sense, and is abusive terminology since the KL-Divergence is non-symmetric, and therefore not a distance metric in the mathematical sense.} to the Gibbs measure~\cite{williams2016aggressive,theodorou2018linearly,boutselis2019variational}. Define the control policy and actuator co-design problem as
\begin{subequations}\label{eq:theta_and_x}
\begin{align}
        \vTheta^{*} &=  \argmin_{\vTheta}  D_{KL}(\calL^{*}|| \tilde{\calL}) \\
        \vx^{*} &= \argmin_{\vx} D_{KL}(\calL^{*}|| \tilde{\calL})
\end{align}
\end{subequations}

Typically, actuator co-design optimization with policy optimization is performed in outer and inner loops, respectively, where the policy optimization is performed more often than the actuator co-design optimization. However, throughout experiments, the authors found that a joint optimization problem dramatically outperforms the split problem in \cref{eq:theta_and_x}. To make this clear, define a new variable $\hat{\vTheta} := [\vTheta, \; \vx]^\top$, and with it the new joint variational optimization as
\begin{equation}
    \hat{\vTheta}^{*} = \argmin_{\hat{\vTheta}} D_{KL}(\calL^{*}|| \tilde{\calL}).
\end{equation}
Expanding the KL divergence and applying the chain rule yields
\begin{align}
    \hat{\vTheta}^{*} &= \argmin_{\hat{\vTheta}} \bigg[  \int \log \Big( \frac{ \rd \calL^*}{\rd \calL} \frac{\rd \calL}{\rd \tilde{\calL}} \Big)  \rd  \calL^* \bigg],
\end{align}
which is equivalent to minimizing
\begin{align}
    \hat{\vTheta}^{*} &= \argmin_{\hat{\vTheta}} \bigg[ \int \log \Big( \frac{\rd \calL}{\rd \tilde{\calL}} \Big) \rd  \calL^* \bigg].
\end{align}  
Performing importance sampling yields
\begin{align}\label{eq:min_theta_x}
    \hat{\vTheta}^{*} &= \argmin_{\hat{\vTheta}} \bigg[\int \log \Big(\frac{\rd \calL}{\rd \tilde{\calL}} \Big)  \frac{\rd  \calL^*}{\rd \calL} \frac{\rd  \calL}{\rd \tilde{\calL}} \rd \tilde{\calL} \bigg].
\end{align}
The proposed iterative approach performs episodic reinforcement with respect to a loss function in order to optimize \cref{eq:min_theta_x}. Define the loss function as
\begin{align}\label{eq:def_loss_function}
    L(\hat{\vTheta}) &:= \Eb_{\tilde{\calL}}  \Bigg[\log \Big(\frac{\rd \calL}{\rd \tilde{\calL}} \Big)  \frac{\rd  \calL^*}{\rd \calL} \frac{\rd  \calL}{\rd \tilde{\calL}} \Bigg]
\end{align}
Plugging \cref{eq:RN} and \cref{eq:Gibbs} into \cref{eq:def_loss_function} yields
\begin{align}\label{eq:loss_function}
    L(\hat{\vTheta}^{(k)}) =  \Eb_{\tilde{\calL}}  \vast[ \frac{\exp( - \rho \tilde{J})}{\Eb_{\tilde{\calL}} \big[ \exp( - \rho \tilde{J}) \big]} \bigg( &-\sqrt{\rho} \calN(\hat{\vTheta}^{(k)}) - \frac{\rho}{2} \calP(\hat{\vTheta}^{(k)}) \bigg) \vast],
\end{align}
where $\tilde{J}= \tilde{J}(Z_{0:T}, \hat{\vTheta}^{(k)})$ is defined as
\begin{equation}\label{eq:importance_sampled_cost}
    \tilde{J}(Z_{0:T}, \hat{\vTheta}^{(k)}) := J(Z_{0:T}) + \frac{1}{\sqrt{\rho}}\calN(\hat{\vTheta}^{(k)}) +\frac{1}{2}\calP(\hat{\vTheta}^{(k)}),
\end{equation}
and $J(Z_{0:T})$ is a state cost evaluated over the state trajectory $Z_{0:T}$. For reaching tasks, $J(Z_{0:T})$ is typically a weighted 2-norm distance to the goal state.

This loss function compares sampled trajectories by evaluating them on the exponentiated $\tilde{J}$ performance metric. The importance sampling terms $\calN$ and $\calP$, which appear in $\tilde{J}$ add a quadratic control penalization term and a mixed control noise term. In the Loss function, they serve as weights for the exponentiated cost trajectories. For convenience, we denote the exponentiated cost term as
\begin{equation} \label{eq:gibbs_final}
    \calE(Z_{0:T}, \vTheta^{(k)}) :=  \frac{\exp\big( - \rho \tilde{J}(Z_{0:T}, \vTheta^{(k)})\big)}{\Eb_{\tilde{\calL}} \Big[ \exp\big( - \rho \tilde{J}(Z_{0:T}, \vTheta^{(k)})\big) \Big]}
\end{equation}

Recall, that the nonlinear policy $\vPhi$ is a functional mapping into Hilbert space $\calH$ (or $\calH^2$). This is kept general for derivation purposes, however it implies that the nonlinear policy controls each element of an infinite vector in Hilbert space $\calH$ (or $\calH^2$). A more realistic, but less general representation refines the policy as 
\begin{equation}\label{eq:finite_actuation}
    \vPhi(t,Z,\vx;\vTheta^{(k)}) = \vm(\vx)^\top \varphi(Z ; \vTheta^{(k)} ),
\end{equation}
where $\vm(\vx):\calD^N \rightarrow \Rb^N \times \calH$ represents the effect of the actuation from $N$ actuators on the infinite-dimensional field. Typically this is either a Gaussian-like exponential with mean centered at the actuator locations or an indicator function. 

In \cref{eq:finite_actuation}, $\varphi(X; \vTheta^{(k)}): \calH \rightarrow \Rb^N$ is a policy network with $N$ control outputs representing $N$ distributed (or boundary) actuators. Note that as desired, the tensor contraction given on the right hand side of \cref{eq:finite_actuation} produces a vector in $\calH$ (or $\calH^2$). Splitting the actuation function from the control signal is also desired because we ultimately wish to use a finite input, finite output policy network for the function $\varphi(X; \vTheta^{(k)})$. The inner product terms become
\begin{align}
    \calN(\hat{\vTheta}^{(k)}) &= \int_{0}^{T}\big\langle\vm(\vx)^\top \varphi(Z ; \vTheta^{(k)} ), \rd W(s)\big\rangle \label{eq:noise_inner_product} \\
    \calP(\hat{\vTheta}^{(k)}) &= \int_{0}^{T}||\vm(\vx)^\top \varphi(Z ; \vTheta^{(k)} )||^{2}\rd s \nonumber \\
    &= \int_{0}^{T}\big\langle \varphi(Z ; \vTheta^{(k)} ), \vM(\vx)\varphi(Z ; \vTheta^{(k)} ) \big\rangle \rd s \label{eq:policy_inner_product}
\end{align}
where $\vM(\vx) = \vm(\vx) \vm(\vx)^\top$.

Many state of the art methods for training networks rely on a gradient approach~\cite{duchi2011adaptive,kingma2014adam}, wherein one computes a loss function that depends on the network parameters, and iteratively updates the network parameters based on the gradients of the loss with respect to said network parameters. Bootstrapping off the wide-spread success of these methods, we prescribe a similar gradient-descent update, which can be interchanged with any such gradient approach, given by
\begin{align}
    \vTheta^{(k+1)} &= \vTheta^{(k)} - \gamma_\vTheta \nabla_\vTheta L(\vTheta^{(k)}, x^{(k)}) \label{eq:policy_update}\\
    \vx^{(k+1)} &= \vx^{(k)} - \gamma_\vx \nabla_\vx L(\vTheta^{(k)}, \vx^{(k)}) \label{eq:actuator_update}
\end{align}
where $\gamma_\vTheta$ and $\gamma_\vx$ are the learning rates for the policy parameters and actuator design parameters, respectively, and $\nabla_a := \frac{\partial}{\partial a}$, denotes the partial derivative with respect to some finite-dimensional vector $a$.

Figure \ref{fig:alg_diagram} is a graphical representation of our approach. A Hilbert space policy network with initialized weights is passed through an SPDE model or physical realization of the system to produce state trajectories, which are used to compute a state cost as well as a sparse tensor that is used to compute the inner products in a memory and time-efficient manner. This method will be explained further in the subsequent section. The loss is computed and separate gradients are computed for the policy and actuator design. These gradients are used in conjunction with a gradient-based optimization algorithm such as \ac{SGD} to provide parameter updates to the policy network and actuator design. This approach is independent of discretization scheme, choice of actuation design components, choice of state cost functional, and choice of policy network.

\begin{figure}[ht!]
\centering
  \includegraphics[width=1.0\linewidth]{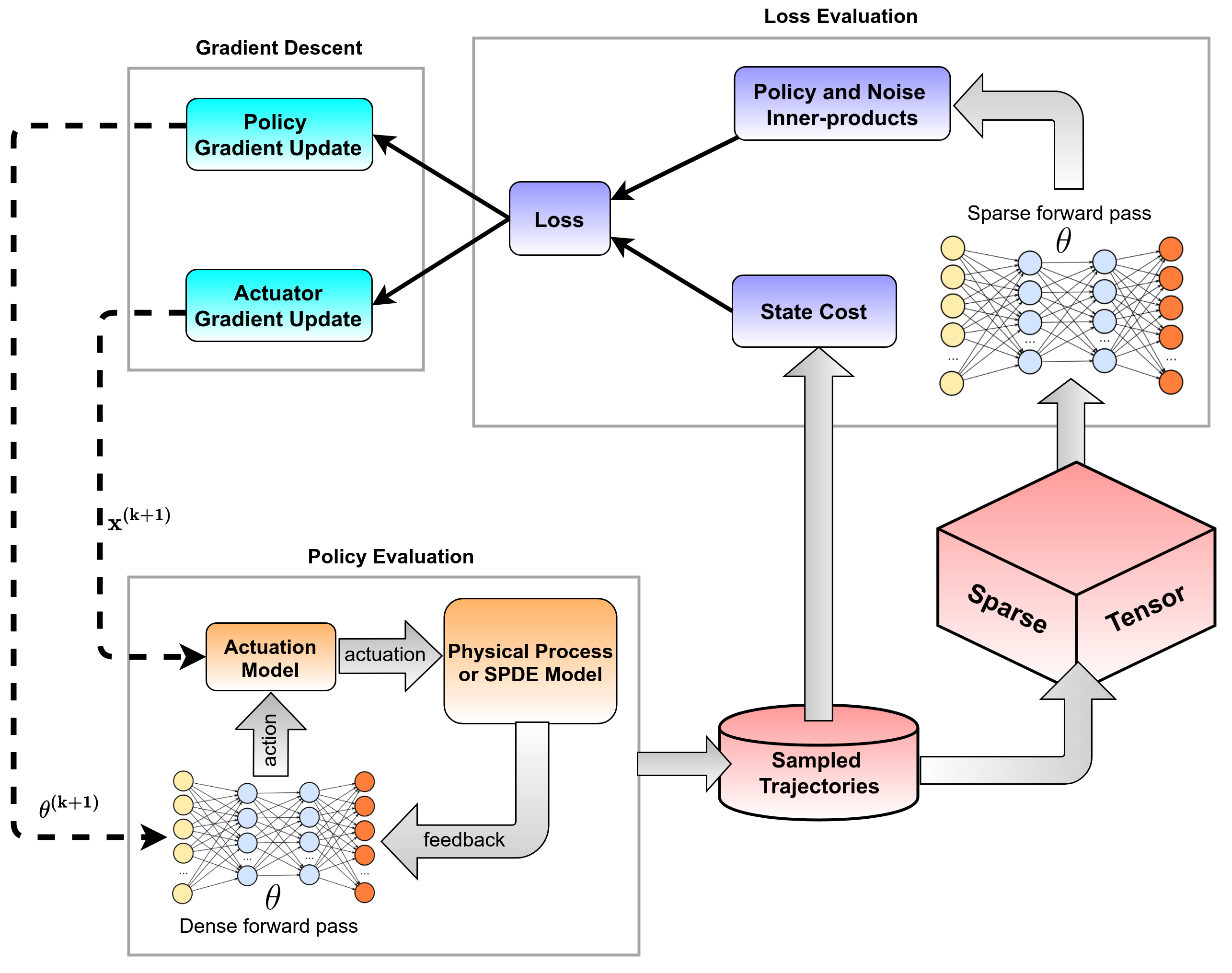}
  \caption{Diagram of the spatio-temporal stochastic optimization (STSO) approach for policy and actuator co-design optimization.}
\label{fig:alg_diagram}
\end{figure}

A key observation is that up to this point, we have a continuous-time optimization approach defined completely in Hilbert spaces; we have not performed any discretization of time or space. The benefit of this fact is that it equips our framework with the property of being \textit{discretization agnostic}. In other words, \textit{any} discretization scheme, for \textit{any} semi-linear \ac{SPDE} with additive Cylindrical Wiener process can be used in conjunction with the proposed algorithm. In fact, since the only term from the dynamics to appear in \cref{eq:def_loss_function} and \cref{eq:importance_sampled_cost} is the Cylindrical Wiener process $\rd W$, the optimization approach may consider the system and actuation model as a differentiable \textit{black-box}; one needs only the model of the additive Cylindrical Wiener process. In what follows, we consider \textit{any} discretization of the system, and provide numerical methods to handle difficulties that arise after discretization.

\section{Discrete Approximation Methods} \label{sec:approx}

The above derivation provides a general Hilbert space framework for optimizing nonlinear policies to control \acp{SPDE} to achieve some task. This approach represents an \textit{optimize-then-discretize} scheme. In order to implement the approach as an algorithm on a digital computer, data must be collected at finite frequency from interactions with a real system, or generated by a discretized physics-based or data-based model. In this section, we address the implement-ability of our approach with these considerations in mind.


\subsection{Sparse Spatial Integration}

Unique to this approach for training policy networks are the inner products that appear in \cref{eq:noise_inner_product} and \cref{eq:policy_inner_product}. Each of these Hilbert space inner products represent a spatial integration over the finite region $\calD$. Numerical methods to efficiently compute these spatial inner products were first developed in \cite{Evans2019IDVRL}. Consider the inner product in \cref{eq:policy_inner_product}. For 2D systems, it can be represented as a spatial integral in the form
\begin{align} \label{eq:inner_product_computation}
    &\int_{0}^{T}\Langle \varphi(Z(t) ; \vTheta^{(k)} ), \vM(\vx)\varphi(Z(t) ; \vTheta^{(k)} ) \Rangle \rd t \\
    &= \int_{0}^{T} \iint_D \varphi(Z(t,x,y) ; \vTheta^{(k)} )^\top \vM(x,y) \varphi(Z(t,x,y) ; \vTheta^{(k)} ) \rd x\, \rd y\,   \rd t \nonumber \\
    &= \int_{0}^{T} \sum_{j=1}^{\infty} \varphi(Z(t, e_j) ; \vTheta^{(k)} )^\top \vM(e_j) \varphi(Z(t, e_j) ; \vTheta^{(k)} ) \rd t,
\end{align}
where $\lbrace e_j \in \calH : j = 0,1,2,\dots \rbrace$ forms an orthonormal basis over $\calH$. After applying a spatial discretization to the system, the basis becomes a finite set $\lbrace e_j \in \Rb^{J^2} : j = 0,1,2,\dots \rbrace$, where $J$ is the number of discretization points in each dimension \footnote{We assume without loss of generality, that each dimension has the name number of discretization points $J$.}. One choice of such a basis is the set of one-hot vectors which emerges naturally from applying a central difference discretization, however, one may use a different basis or project onto the one-hot basis. Therefore, this integration scheme is also agnostic to the choice of discretization. Thus, evaluating the spatial integral is reduced to summing up forward passes through the policy network with each pixel considered individually.

Motivated by this one-hot basis approach, in \cite{Evans2019IDVRL} we developed a sparse matrix method for efficiently handling the spatial integrals, which become integrals of time-indexed ($J^2, J, J$) tensors for each sample. The key observation is that since the basis elements of each $(J,J)$ ``image" have only one activated ``pixel", the resulting tensor is tremendously sparse. As such, each layer's activation can be computed with a sparse matrix multiplication, resulting in the so-called $SparseForwardPass$ method that is not memory intensive for relatively large 2D problems. This can be applied to policy network architectures that utilize fully connected layers and convolutional layers. For convolutional input layers, this can be achieved by representing the convolution as a matrix multiplication with a Toeplitz-like matrix constructed from the filter coefficients~\cite{chellapilla2006high}.

\subsection{Approximate Discrete Optimization}

On the side of actuator co-design optimization, it is useful to refine the optimization procedure in \cref{eq:policy_update} and \eqref{eq:actuator_update} for certain optimization variables in light of the discretization. Such a case would be the placement of actuators, where depending on the actuation function, the system may not feel the effect of an actuator placed between discretization points.

To see this more clearly, consider the \ac{1D} spatial continuum $\calD= [0,1]$ discretized into a 11 point \ac{1D} grid. Lets assume that an actuator is chosen to be placed at $x=0.25$. Even though the actuation function $\vm(\vx)$ may be Gaussian-like function, the majority of the actuation will be felt in between two grid points, namely 0.2 and 0.3. This problem is even more severe if the actuation function $\vm(\vx)$ is the indicator function, as there will be \textit{no} actuation exerted on the field \textit{irrespective} of the control signal magnitude. Denote the number of spatial discretization points as $J$ and a \ac{3D} discretized problem domain grid as $\hat{\calD}$ composed of $J^3$ elements. Let $\vx_p$ denote the subset of optimization variables of $\vx$ that capture the placement of actuators, and $\vx_c$ as the rest of the elements of $\vx$. The optimization problem becomes
\begin{equation} \label{eq:discrete_min_theta_hat}
    \begin{split}
        \min_{\vTheta, \vx}\;\; &L(\vTheta, \vx) \\
        \text{subject to}\;\; &\vx_p \in \hat{\calD}
    \end{split}
\end{equation}

This formulation is an accurate representation, yet limits gradient flow from the loss function back to the actuator design parameters. In order to maintain these gradients, \cite{evans2020spatio} develops the following approximate approach. Define a one-to-one map $S:\hat{\calD} \rightarrow \Zb_+$, where $\Zb_+$ denotes positive integers. Applying the forward and inverse mapping produces a gradient-based parameter update of the form 
\begin{align}
    \vTheta^{(k+1)} &=  \vTheta^{(k)} - \gamma_\vTheta \nabla_{\vTheta}L(\vTheta^{(k)}, \vx_p^{(k)}, \vx_c^{(k)}) \label{eq:theta_update}\\
    \vx_c^{(k+1)} &= \vx_c^{(k)} - \gamma_{\vx_c} \nabla_{\vx_c} L(\vTheta^{(k)}, \vx_p^{(k)}, \vx_c^{(k)}) \label{eq:xc_update}\\
    \vx_p^{(k+1)} &= S^{-1}\bigg(R\Big(S\big(\vx_p^{(k)} - \gamma_{\vx_p} \nabla_{\vx_p} L(\vTheta^{(k)}, \vx_p^{(k)}, \vx_c^{(k)})\big)\Big)\bigg) \label{eq:xp_update}
\end{align}
where $R(\cdot)$ simply rounds to the nearest integer. This approach allows the use of well-known gradient update algorithms such as ADA-Grad~\cite{duchi2011adaptive} and ADAM~\cite{kingma2014adam}. See \cite{ruder2016overview} for an overview of popular gradient update algorithms used in machine learning.

\subsection{Modified Virtual Approximate Discrete Optimization} \label{subsec:modified_opt}

There is a key limitation with the above approach. In the case of a small gradient, the effect of rounding may "override" the effect of the gradient update. Thus, the gradient update may be prevented from changing the value until the gradient is large enough to push the variable close to the next discretization point. This effect becomes especially pronounced in the case of a course discretization, but also becomes apparent when the actuator placement is near an optimal value. In this local region, the gradient is relatively flat, so improper tuning of the learn rate combined with a course discretization grid would result in convergence to a sub-optimal value.

In this work we propose the following novel modification. Consider a virtual optimization variable $\vv \in \calD$ to serve as an intermediary in the optimization process. The goal of this intermediary is to preserve the gradient movement from the update, yet only allow the true optimization variable $\vx_p \in \hat{\calD}$ to exist on the discretization grid. Instead of applying the $S^{-1}\Big(R\big(S(\cdot)\big)\Big)$ map to the same variable update as in \cref{eq:xp_update}, we wish to carry the true gradient update information over iterations, so that the effect of the iterative update is additive over iterations. However, the issue is that the map $S^{-1}\Big(R\big(S(\cdot)\big)\Big)$ is a non-differentiable map due to the rounding in $R(\cdot)$. The proposed solution is to modify the optimization problem as follows
\begin{align}
    \vTheta^{(k+1)} &=  \vTheta^{(k)} - \gamma_\vTheta \nabla_{\vTheta}L(\vTheta^{(k)}, \vx^{(k)}) \label{eq:mod_theta_update}\\
    \vx_c^{(k+1)} &= \vx_c^{(k)} - \gamma_{\vx_c} \nabla_{\vx_c} L(\vTheta^{(k)}, \vx_p^{(k)}, \vx_c^{(k)}) \label{eq:mod_xc_update}\\
    \vv^{(k+1)} &= \vv^{(k)} - \gamma_{\vx_p} \nabla_{\vx_p} L(\vTheta^{(k)}, \vx_p^{(k)}, \vx_c^{(k)}) \label{eq:mod_v_update}\\
    \vx_p^{(k+1)} &= S^{-1}\Big(R\big(S( \vv^{(k+1)} ) \big) \Big) \label{eq:mod_xp_update}
\end{align}

Here, we carry two variables: a continuous-valued  variable $\vv \in \calD$, and a discrete-valued variable $\vx_p \in \hat{\calD}$. We compute the gradient of the loss $L$ with respect to the \textit{discrete}-valued variable $\nabla_{\vx_p} L(\vTheta^{(k)}, \vx_p^{(k)}, \vx_c^{(k)})$, but apply this gradient in a gradient update to the \textit{continuous}-valued variable $\vv$, which in effect bypasses the non-differentiable map. This important difference results in a virtual optimization variable $\vv$ which can reach the true optimal value, but is not applied to the system, and a secondary variable $\vx_p$ which represents the grid element nearest to the optimal value, and is applied to the system.


\section{Algorithm and Network Architecture} \label{sec:alg}

As discussed previously, implementation of the above framework requires spatial and temporal discretization of the \acp{SPDE} discussed in \cref{sec:preliminaries}. With this in mind, we choose an \ac{ANN} for our nonlinear policy $\varphi(Z; \vTheta^{(k)})$. In this work we use  \acp{FNN} for 1D experiments, and \acp{CNN} for 2D experiments. We use physics-based models of each \ac{SPDE} to generate training data. Given that the proposed framework is semi-model-free, real system data can seamlessly replace the physics-based model as described in~\cite{Evans2019IDVRL}. We only need prior knowledge of the flavor of noise, a differentiable model\footnote{Note that the actuation model can also be a black-box model} of the actuation function $\vm(\vx)$, and the actuator design elements $\vx$.

The resulting modified algorithm, modified from the original algorithm named \ac{ADPL} in \cite{evans2020spatio}, is referred to here as \ac{STSO} and shown in \cref{Algorithm1}. Here we modify the notation as well to specify the role of rollouts by using superscript $r$ to denote rollout $r \in R$, and superscript $0\!:\!R$ to denote the collected set of rollouts. We also generalize to optimizing over actuator placement and other non-placement actuator design variables, such as actuator variance. The inputs can change depending on the specific problem but in most cases contain time horizon ($T$), number of iterations ($K$), number of rollouts ($R$), initial state ($Z_0$), number of actuators ($N$), noise variance ($\rho$), time discretization ($\Delta t$), initial actuator variance ($\sigma_\mu^{(0)}$), initial network parameters ($\vTheta^{(0)}$), initial actuator locations ($\vx_p^{(0)}$), policy learn rate ($\gamma_\vTheta$), actuator location learn rate ($\gamma_{\vx_p}$), and actuator shape learn rate ($\gamma_{\vx_c}$). For more information on $SampleNoise()$, refer to~\cite[Chapter 10]{lord_powell_shardlow_2014}.

\begin{algorithm}[!t]
 \caption{Stochastic Spatio-Temporal Optimization}
 \begin{algorithmic}[1]
 \State \textbf{Function:} \textit{$\vTheta^* =$ \textbf{OptimizePolicyActuatorVars}($T$,$K$,$R$,$Z_0$,$N$,$\rho$,$\Delta t$,$\mu$,$\sigma_{\mu}^{(0)}$,$\vTheta^{(0)}$,$\vx_p^{(0)}$, $\gamma_\vTheta$,$\gamma_{\vx_c}$,$\gamma_{\vx_p}$)}
 \For {$k=1 \; \text{to} \; K$}
  \State Compute $\vm(\vx_p, \vx_c), M(\vx_p, \vx_c)$ $\forall$ $\vx_p \in \hat{\calD}$
 \For {$r=1 \; \text{to}\; R$}
 \For{$t=1 \;\text{to}\; T$}
     \State $\rd W_t^r \gets SampleNoise()$
     \State $Z_t^r \gets Propagate(Z_{t-1}^r,\vTheta^{(k)}, \rd W_t^r)$ via \cref{eq:EB_Hilbert_form}
     \State $u_t^r \gets SparseForwardPass(\vTheta^{(k)},Z_t^r)$
 \EndFor
 \EndFor
 \State $J^{0:R} \gets StateCost(Z_{0:T}^{0:R})$
 \State $N^{0:R}\gets \calN\big(u_{0:T}^{0:R}, \rd W_{0:T}^{0:R},  \vm(\vx)\big)$ via \cref{eq:noise_inner_product}
 \State $P^{0:R} \gets \calP\big(u_{0:T}^{0:R}, M(\vx)\big)$ via \cref{eq:policy_inner_product}
 \State $E^{0:R} \gets \calE(J^{0:R}, N^{0:R}, P^{0:R})$ as in \cref{eq:gibbs_final}
 \State $L \gets ComputeLoss(P^{0:R},N^{0:R},E^{0:R})$ via \cref{eq:importance_sampled_cost}
 \State Compute $\nabla_\vTheta L$ via backprop
 \State Compute $\nabla_{\vx_p} L$ via backprop
 \State Compute $\nabla_{\vx_c} L$ via backprop
 \State $\vTheta^{(k+1)} \gets GradientStep(\nabla_\vTheta L, \gamma_\vTheta, \vTheta^{(k)})$ via \cref{eq:mod_theta_update}
 \State $\vx_c^{(k+1)} \gets GradientStep(\nabla_{\vx_c} L, \gamma_{\vx_c}, \vx_c^{(k)})$ via \cref{eq:mod_xc_update}
 \State $\vv^{(k+1)} \gets GradientStep(\nabla_{\vx_p}L, \gamma_{\vx_p}, \vv^{(k)})$ via \cref{eq:mod_v_update}
 \State $\vx_p^{(k+1)} \gets SnapToGrid(\vv^{k+1})$ via \cref{eq:mod_xp_update}
 \EndFor
 \end{algorithmic}
 \label{Algorithm1}
\end{algorithm}

The method $GradientStep$ performs a gradient update of any gradient-based optimization algorithm. In this work we apply ADAM~\cite{kingma2014adam} gradient update variants of \cref{eq:mod_theta_update,eq:mod_xc_update,eq:mod_v_update} in all of our experiments. This version of $GradientStep$ is different than that of \cite{evans2020spatio} due to the modifications described in \cref{subsec:modified_opt}, namely there is no need to add a separate~\footnote{separate from the momentum native to the ADAM update} momentum term to help the gradients reach optimal values since we are now carrying the continuous-valued virtual variable $\vv$, which can change over iterations even when the true variable $\vx_p$ remains at the previous grid element due to the method $SnapToGrid$.

The use of different learning rates for each type of variable is often essential. The authors conjecture that the optimization landscape is typically more shallow for the actuator design than for the policy parameters. For most of the experiments, the actuator placement learning rate $\gamma_{x_p}$ is set to about 30 times larger than the policy network learning rate $\gamma_\vTheta$, however this can be dependent on the problem, selection of number of actuators, and policy parameter initialization type (e.g. Xavier vs zeroes).

 

\section{Policy \& Co-Design Optimization of Simulated Robotics PDEs} \label{sec:results_rss}

In \cite{evans2020spatio} the approach was applied to four simulated SPDE experiments to simultaneously place actuators and optimize a policy network. Each experiment used less than 32 GB RAM, and was run on a desktop computer with a Intel Xeon 12-core CPU with a NVIDIA GeForce GTX 980 GPU. The code was written to operate inside a Tensorflow graph~\cite{tensorflow} to leverage rapid static graph computation, as well as sparse linear algebra operations used by $SparseForwardPass$~\cite{Evans2019IDVRL}. The first two experiments involved a reaching task, where the \acp{SPDE} are initialized at a zero initial condition over the spatial region and must reach certain values at pre-specified regions of the spatial domain. The last two experiments involved a suppression task, where some non-zero initial condition must be suppressed on desired regions.

The data that was used for training was generated by a spatial central difference, semi-implicit time discretized version of each \ac{SPDE}. These schemes are described in detail in~\cite[Chapter 3 \& 10]{lord_powell_shardlow_2014}. Each experiment had all actuator locations initialized by sampling from a uniform distribution on $[0.4a, 0.6a]$, where $a$ denotes the spatial size. For 3500 iterations of the algorithm, run times for the most complicated system--the Euler-Bernoulli equation--were about 15 hours. Details of the experiments and videos of the controlled systems can be found in the provided links~\footnote{Supplement: \href{https://tinyurl.com/yc7fq3lc}{tinyurl.com/yc7fq3lc} $\vert$ Video: \href{https://youtu.be/yo48a6JqKE0}{https://youtu.be/yo48a6JqKE0}}. We encourage the interested reader to contact the first author for code.

Each of the experiments in this section utilized \acp{FNN} for the nonlinear policy $\varphi(h;\Theta)$, built in a Tensorflow graph~\cite{tensorflow} with two hidden layers of ReLU neurons. All policy network weights were initialized with the Xavier initialization~\cite{glorot2010understanding} and trained with ADAM~\cite{kingma2014adam}. In every experiment the function $\vm(\vx)$ was modeled as a Gaussian-like exponential function with the means co-located with the actuator locations, and considered a state cost functional of the form
\begin{equation}\label{supeq:cost}
    J := \sum_{t} \sum_{x} \;\kappa \big(h_{\text{actual}}(t,x) - h_{\text{desired}} (t,x)\big)^2 \cdot \mathbbm{1}_{S}(x)
\end{equation}
where $\kappa$ is a state cost weightm, and $\mathbbm{1}_{S}(x)$ is defined by
\begin{equation} \label{supeq:1Dheat_indicator}
\mathbbm{1}_S(x) :=
\begin{cases}
1,  \quad \text{if } x \in S  \\
0, \quad  \text{otherwise},
\end{cases}
\end{equation}
where $S$ is the spatial subregion on which the desired profile is defined. These desired spatial regions were different for each experiment, and are depicted as green bars in the associated figures.

\begin{figure*}[!t]
    \subfigure[Controlled Contour]{\hspace{-0.0cm}\includegraphics[width=0.58\linewidth]{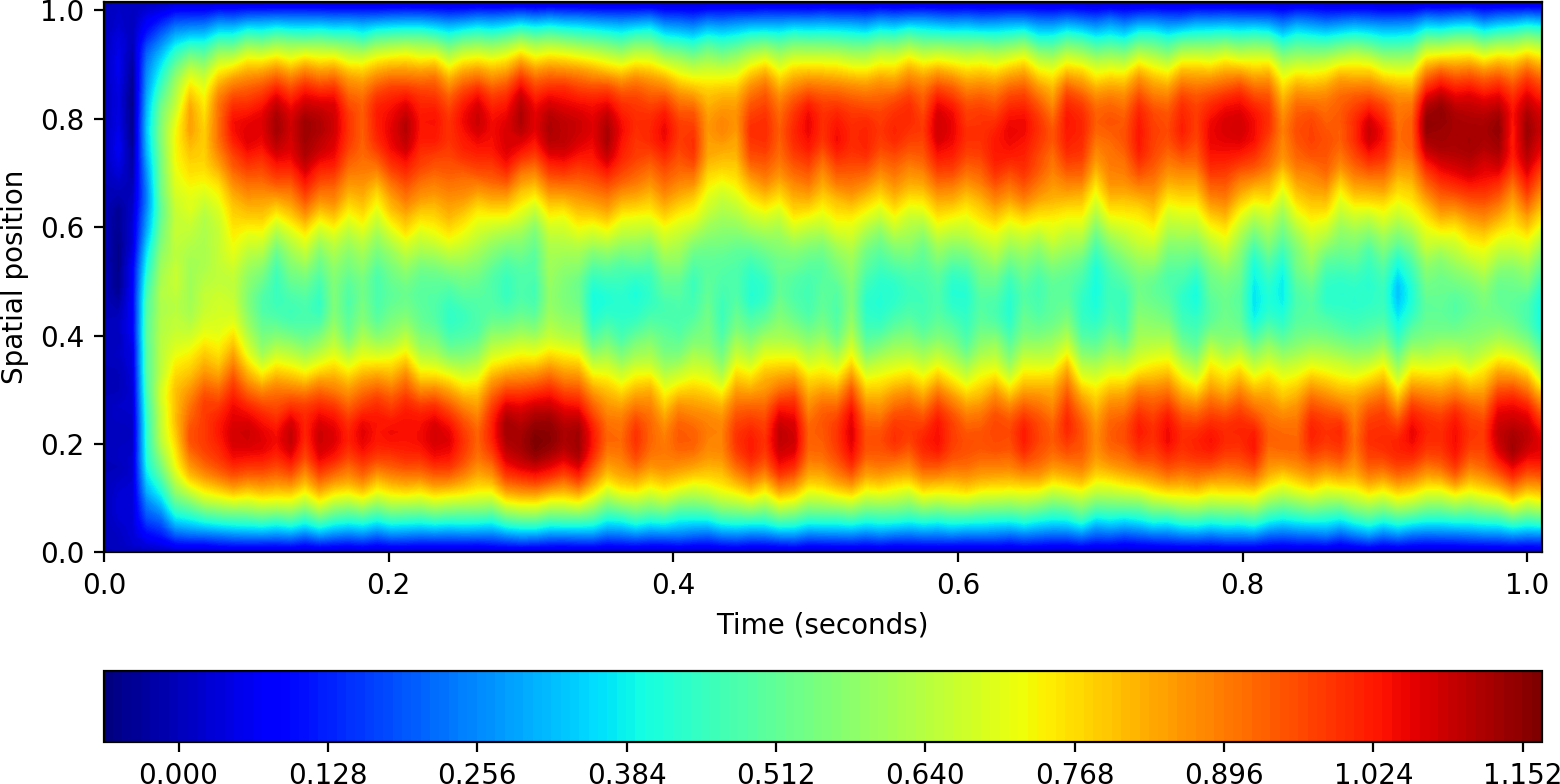}}
    \subfigure[Final Time Snapshot]{\hspace{0.02cm}\includegraphics[width=0.406\linewidth]{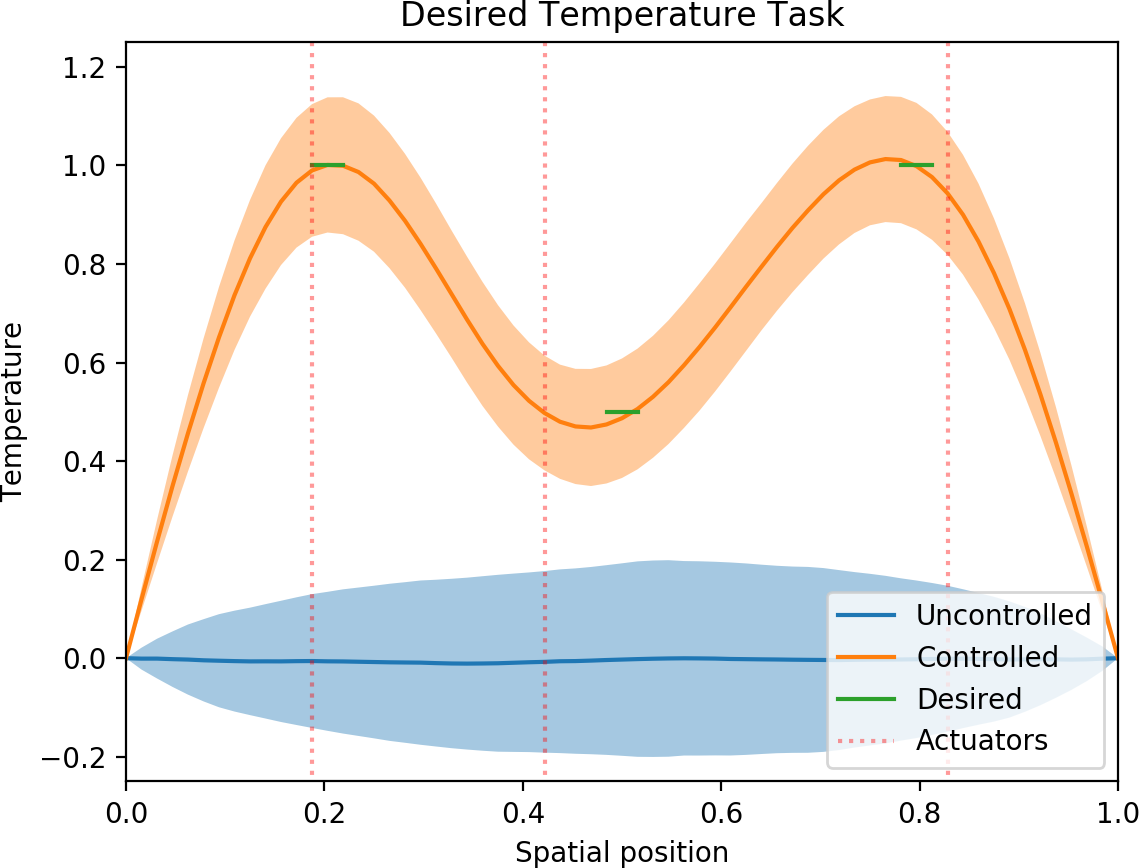}}
    \caption{Heat Equation Temperature Reaching Task. (a) controlled contour plot of a randomly selected trajectory rollout where color represents temperature, (b) final time snapshot comparing to the uncontrolled system. Mean trajectories are represented with a solid line, while a 2$\sigma$ standard deviation is represented with a shaded region.} 
    \label{fig:heat}
\end{figure*}

\begin{figure*}[!t]
    \subfigure[Controlled Contour]{\hspace{-0.0cm}\includegraphics[width=0.58\linewidth]{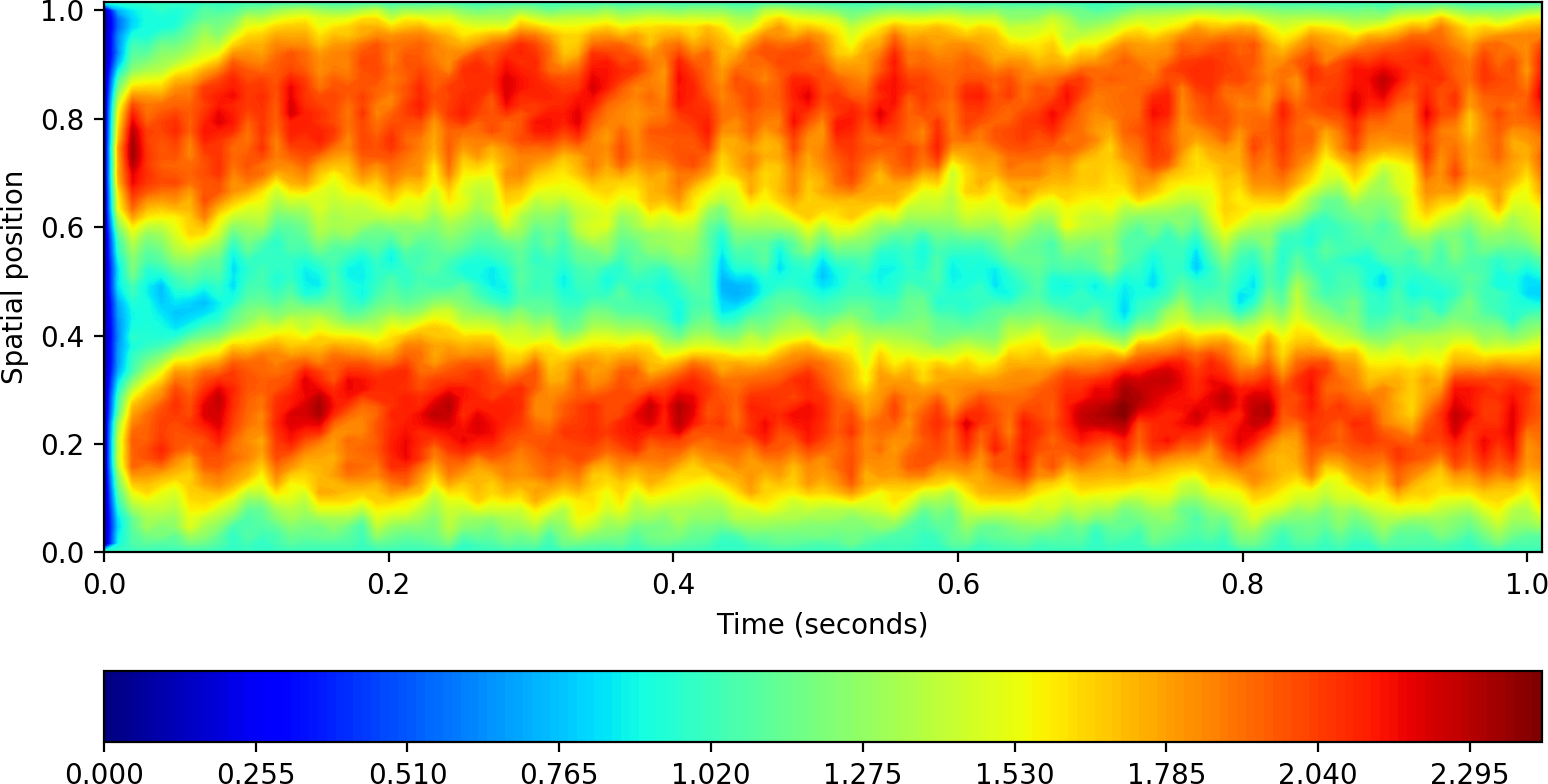}}
    \subfigure[Final Time Snapshot]{\hspace{0.02cm}\includegraphics[width=0.406\linewidth]{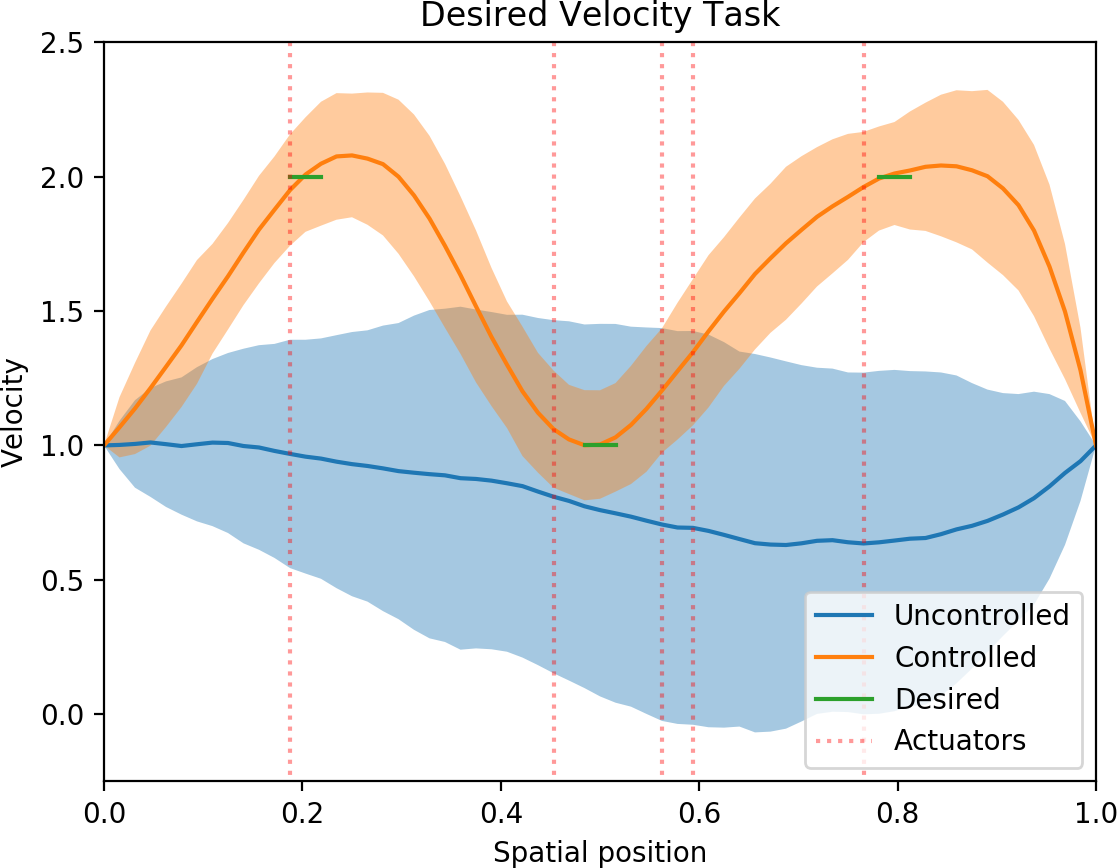}}
    \caption{Burgers Velocity Reaching Task. (a) controlled contour plot of a randomly selected trajectory rollout where color represents velocity, (b) final time snapshot comparing to the uncontrolled system. Mean trajectories are represented with a solid line, while a 2$\sigma$ standard deviation is represented with a shaded region.} 
    \label{fig:burgers}
\end{figure*}

The first experiment was a temperature reaching task on the \ac{1D} Heat equation with homogeneous Dirichlet boundary conditions, given in \textit{fields representation} by
\begin{equation}
    \begin{split}
    \partial_t h(t,x) &= \epsilon \partial_{xx} h(t,x) + \vm(\vx)^{\top} \, \varphi\big(h(t,x);\Theta\big)  + \frac{1}{\sqrt{\rho}} \partial_t W(t,x) ,\\
    h(t,0) &= h(t,a) = 0, \\
    h(0,x) &= h_0(x),
    \end{split}
\end{equation}
where $\epsilon$ is the thermal diffusivity parameter. The results of $3000$ iterations of optimization with $200$ trajectory rollouts per iteration are depicted in \cref{fig:heat}. The task was to raise the temperature at regions specified in green to specified values depicted in the figure.

The next experiment was a velocity reaching task on the Burgers equation with non-homogenous Dirichlet boundary conditions, given in \textit{fields representation} by
\begin{equation} \label{supeq:BurgersSPDE}
\begin{split}
\partial_t h(t, x) &= - h(t,x) \partial_x h(t, x) + \epsilon \partial_{xx} h(t,x) + \vm(\vx)^{\top} \, \varphi(h;\Theta) + \frac{1}{\sqrt{\rho}} \partial_t W(t), \\
h(t,0) &= h(t,a) = 1.0,\\ 
h(0,x) &= h_0(x), 
\end{split}
\end{equation}
where the parameter $\epsilon$ is the viscosity of the medium. The results of $3500$ iterations with $100$ trajectory rollouts per iteration are depicted in \cref{fig:burgers}. The Burgers equation is often used as a simplified model of fluid flow, however Burgers-like reaction-advection-diffusion \ac{PDE} are also often used to describe swarms of robotic systems \cite{elamvazhuthi2018pde}. The Burgers equation has a nonlinear advection term, which produces an apparent rightward motion. The algorithm appears to have taken advantage of the advection for actuator placement in order to solve the task with lower control effort.

The heat equation is a pure diffusion \ac{SPDE}, while the Burgers equation shares the diffusion term with the Heat equation with an added advection term. The results of the Heat and Burgers experiments show actuator locations that take advantage of the natural behavior of each \ac{SPDE}. In the case of the Heat equation, actuators are near the desired regions such that the temperature profile can reach a flat peak of the diffusion at the desired profile. In the case of the Burgers equation, the advection pushes towards the right end of the space, thus forming a wave front that develops at the right end, but leaves the left end dominated by the diffusion term. This is again reflected in the placement of actuators. The first actuator is near the desired region just as the actuators in the Heat \ac{SPDE}, while two of the actuators between the center and the right region are located to be able to control the amplitude and shape of the developing wave front so as to produce a flat peak that aligns with the desired region at the desired velocity. The central desired region is flanked on both sides by actuators that are nearly equidistant, in order to produce another desired flat velocity region at this location.

The third experiment was a voltage suppression task on the Nagumo equation with homogeneous Neumann boundary conditions,
given in \textit{fields representation} by
\begin{align}
\partial_t h(t,x) &= \epsilon \partial_{xx} h(t,x) + h(t,x)\big(1-h(t,x)\big)\big(h(t,x)-\alpha\big) +  \vm(\vx)^{\top} \, \varphi(h;\Theta) \nonumber \\
&\quad + \frac{1}{\sqrt{\rho}} \partial_t W(t,x)  \nonumber \\ 
h_x(t,0) &= h_x(t,a) = 0 \label{eq:nagumo}\\ 
h(0,x) &= \bigg(1+\exp\Big(-\frac{2-x}{\sqrt[]{2}}\Big)\bigg)^{-1} \nonumber ,
\end{align}
where the parameter $\alpha=-0.5$ determines the speed of a wave traveling down the length of the extent and $\epsilon=1.0$ determines the rate of diffusion. The Nagumo equation is often used in neuroscience as a model of the propagation of voltage across an axon in neuronal activation~\cite{lord_powell_shardlow_2014}. However, it has also been used in robotics applications, such as in \cite{aidman2008coupled}, where it was used to describe robot navigation in crowded environments.

\begin{figure*}[t]
    \subfigure[Controlled Contour]{\hspace{-0.0cm}\includegraphics[width=0.58\linewidth]{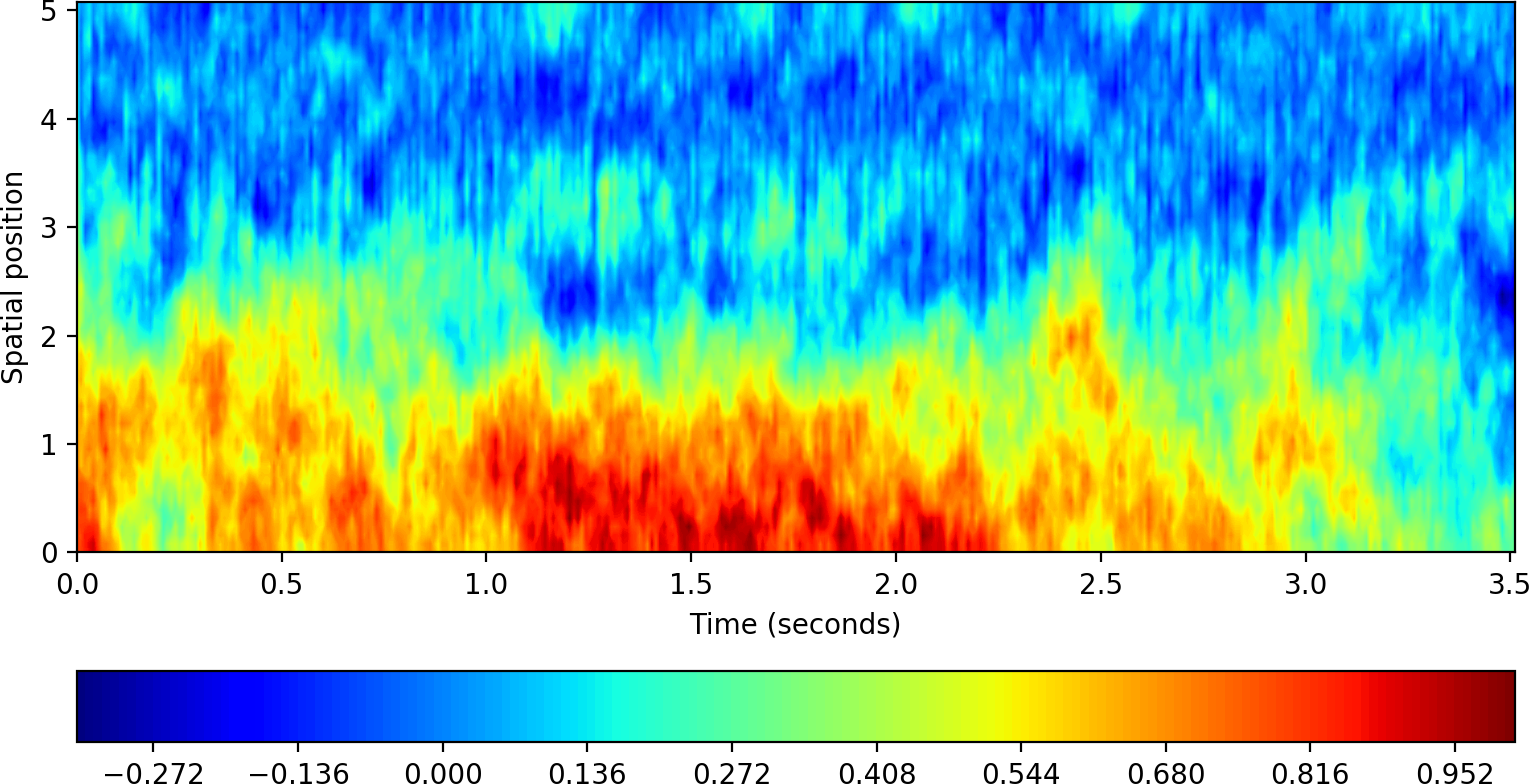}}
    \subfigure[Final Time Snapshot]{\hspace{0.02cm}\includegraphics[width=0.406\linewidth]{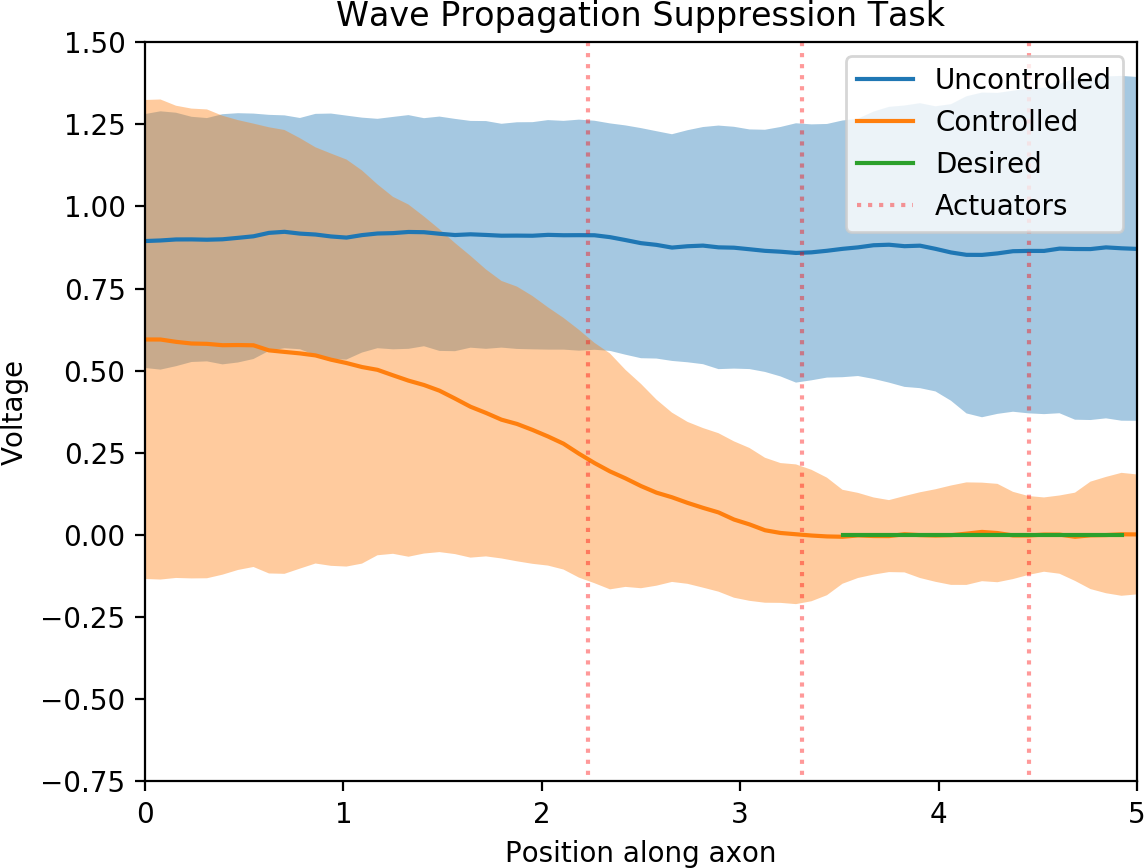}}
    \caption{Nagumo Suppression Task. (a) controlled contour plot of a randomly selected trajectory rollout where color represents voltage, (b) final time snapshot comparing to the uncontrolled system. Mean trajectories are represented with a solid line, while a 2$\sigma$ standard deviation is represented with a shaded region.} 
    \label{fig:nagumo}
\end{figure*}

\begin{figure*}[t]
    \subfigure[Controlled State Costs]{\hspace{-0.0cm}\includegraphics[width=0.402\linewidth]{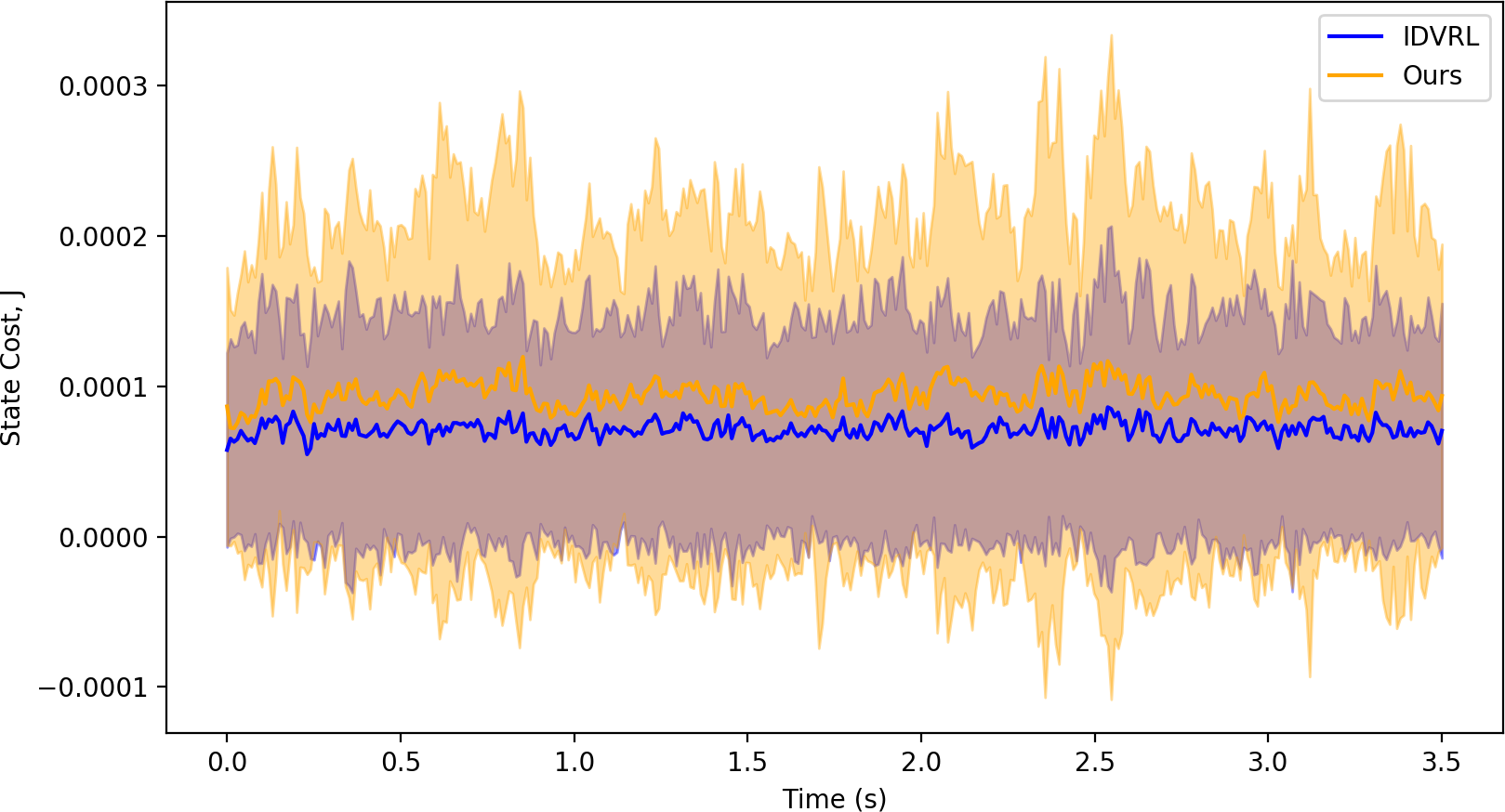}}
    \subfigure[Control Signals]{\hspace{0.02cm}\includegraphics[width=0.287\linewidth]{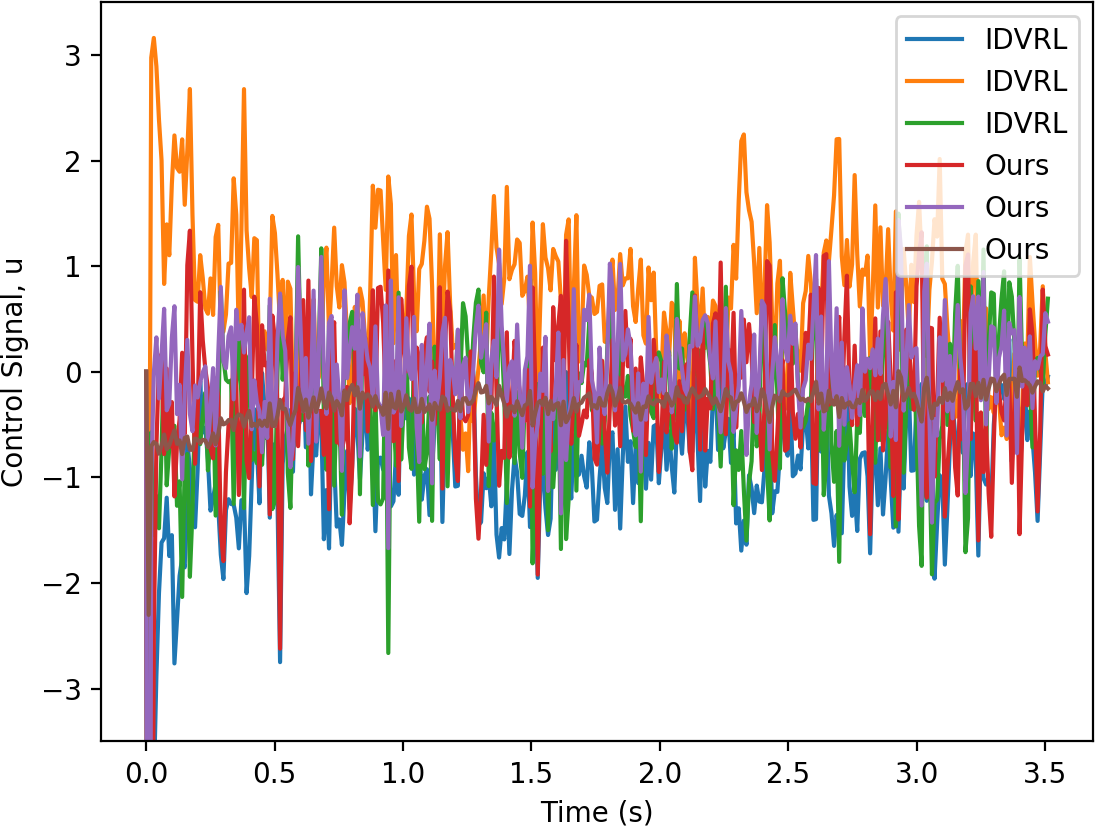}}
    \subfigure[Final Time Snapshot]{\hspace{0.02cm}\includegraphics[width=0.295\linewidth]{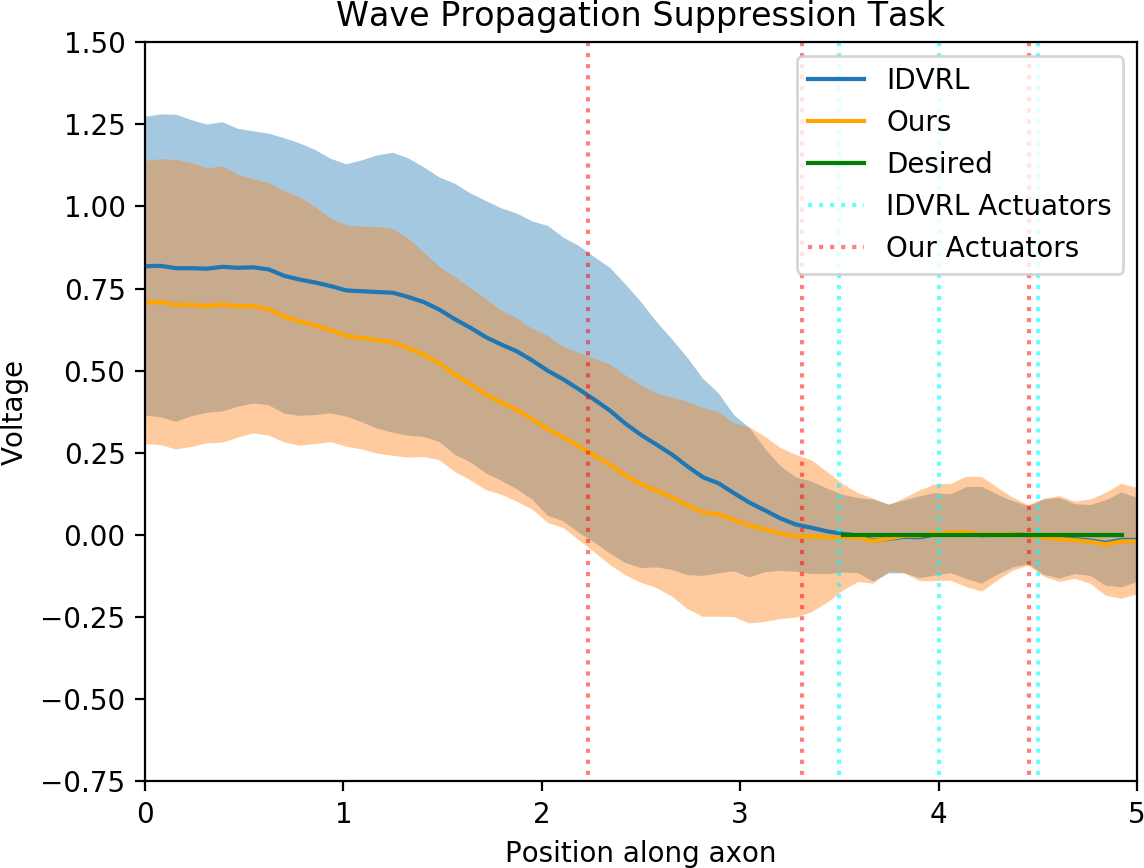}}
    \caption{Nagumo Suppression Task Comparison Plots. (a) controlled state cost plot, where solid lines denote mean, and shaded regions denote a $2 \sigma$ standard deviation, (b) Control signal comparison plot, where lines represent mean behavior, and (c) Final time snapshot comparing the actuators placed by our approach and actuators placed by a human expert with policy optimization by IDVRL.} 
    \label{fig:nagumo_comparison}
\end{figure*}

The joint policy and co-design optimization results are depicted in \cref{fig:nagumo}. The task was to suppress an initial voltage on the left end, that without intervention propagates toward the right end, as shown by the uncontrolled trajectories. The Nagumo equation in \cref{eq:nagumo} is composed of a diffusive term and a 3rd-order nonlinearity, making this equation the most challenging \ac{1D} \ac{SPDE} from a nonlinear control perspective. Despite this, our approach was able to simultaneously place actuators and provide control such that the task was solved. The algorithm was run for 2000 iterations, and demonstrates actuator placement optimization that takes advantage of the natural system behavior. This task was also the most challenging due to the significantly longer planning horizon of 3.5 seconds, as compared to the 1.0 second planning horizon of all the other experiments.

In order to validate our proposed approach, we compared the actuator locations that the algorithm found after optimization to the actuator locations that were hand placed by a human expert for the simulated experiments conducted in our prior work~\cite{Evans2019IDVRL}. To have a valid comparison, we ran the IDVRL algorithm for both sets of actuator locations. Figure \ref{fig:nagumo_comparison} reports these results. The left figure shows that the state costs for each are almost identical. Note that the scale here is $10^{-4}$. The center figure shows the control signals for each actuator, for each method, and demonstrates that for almost identical state cost values, the control effort for each actuator with our approach is lower on average. The calculated average control signal magnitudes for human-placed actuators are 3.3 times higher than those placed by our method. The third plot shows the voltage profile at the final time. We hypothesize that the lower control effort is due to the control over the shape of the spatially propagating signal, enabling it to have a smoother transition into the desired region. While the penalty of this actuator placement is a slightly higher variance on the desired region, the choice appears correct given the result.

The final task conducted in \cite{evans2020spatio} was an oscillation suppression task on the Euler-Bernoulli equation with Kelvin-Voigt damping given in \cref{eq:EB_SPDE}, and is depicted in \cref{fig:EB}. As shown, the initial condition prescribes spatial oscillations, that then oscillate temporally. The second-order nature of the system creates offset and opposite oscillations in the velocity profile, that in turn produce offset and opposite oscillations in the position profile. Without interference, the oscillations proceed over the entire time window, as shown in \cref{fig:EB} (a). As shown in \cref{fig:EB} (b), our approach successfully suppresses these oscillations, which die out quickly under the given control policy. In this experiment, the actuators remained inside the initialized actuator placement region $[0.4a, 0.6a]$ prescribed for all experiments.

\begin{figure*}[t]
    \subfigure[\label{fig:EB_a}Uncontrolled Contour]{\hspace{-0.0cm}\includegraphics[width=0.494\linewidth]{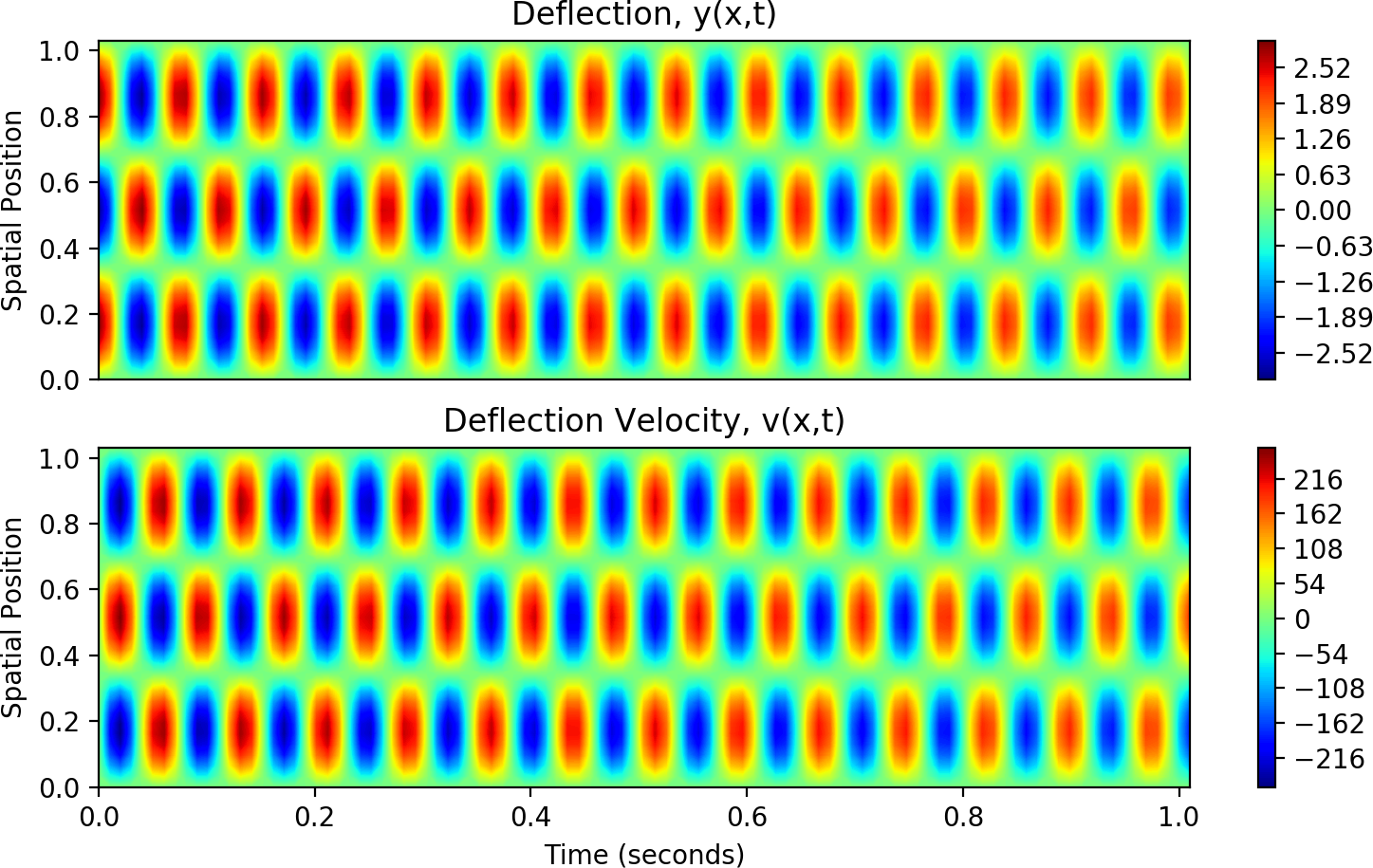}}
    \subfigure[\label{fig:Eb_b}Controlled Contour]{\hspace{0.05cm}\includegraphics[width=0.494\linewidth]{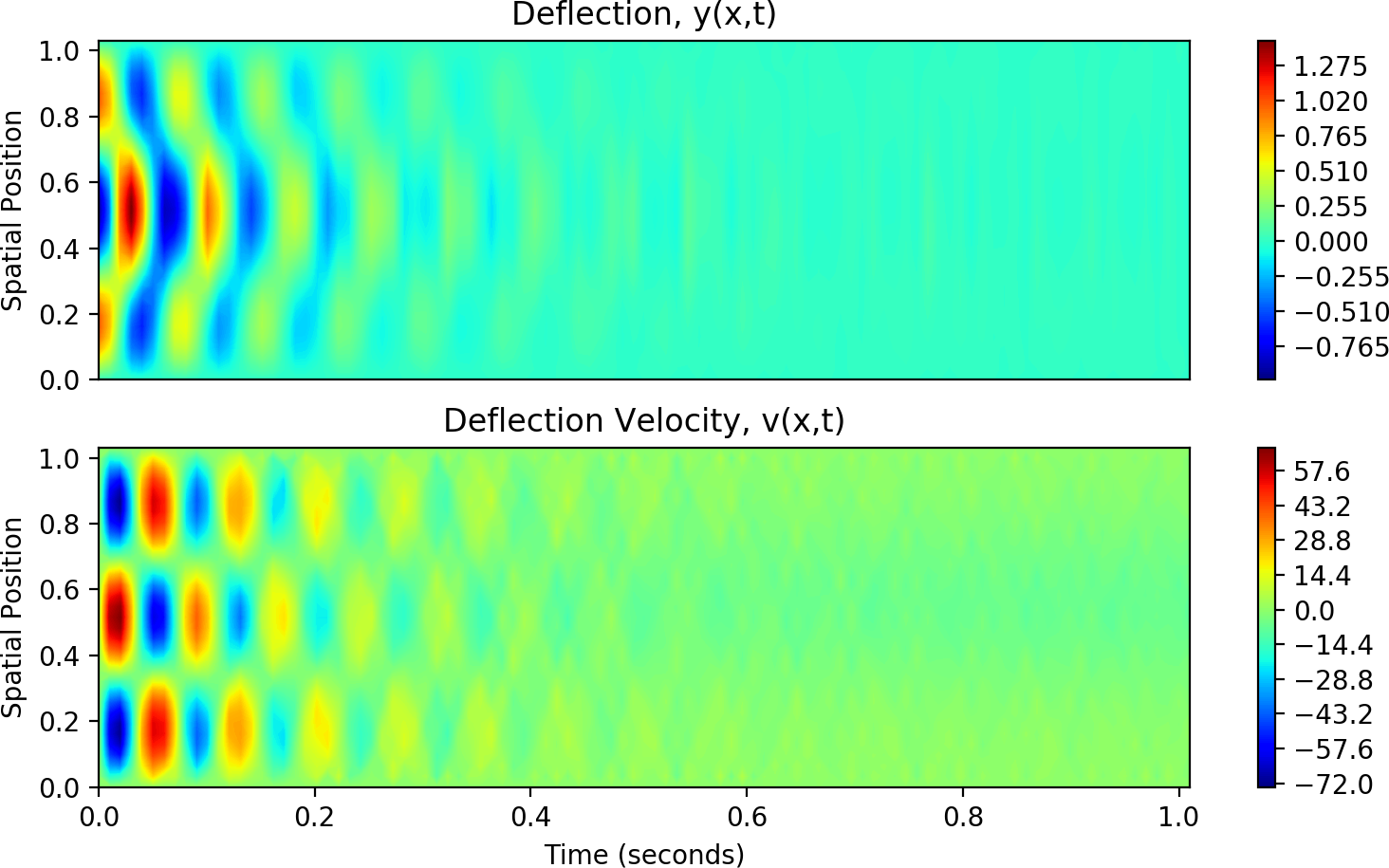}}
    \caption{Euler-Bernoulli Suppression Task. (a) Uncontrolled contour plot (b) Controlled contour plot. In both plots, color represents deflection on top, and deflection velocity on bottom.} 
    \label{fig:EB}
\end{figure*}

The Euler-Bernoulli oscillation suppression task is in fact very challenging and complex. Producing a control signal at an actuator location that is in phase with the velocity oscillations will amplify the oscillations, leading to a divergence. The actuator location and control signal from the policy network must work in concert to produce a control signal out of phase with the velocity and matching its frequency, which is time varying due to control. This time-varying frequency is depicted in \cref{fig:EB} (b).

Each of the above \ac{1D} experiments have unique challenges and in most cases the spatio-temporal problem space produces a joint policy optimization and actuator co-design problem that is littered with local minima. These experiments demonstrate that the proposed approach can jointly optimize a policy network and actuator design. These results and the overall performance of the algorithm are indicative that this approach may enable actuator design on problem spaces where a human has little to no prior knowledge to rely on in a-priori designing actuation to solve the problem by hand.

\subsection{Scaling to Higher Dimensions} \label{sec:results_scaling}

The above experiments motivated the novel experiments presented here, which scale policy and actuator co-design optimization to large \ac{2D} problem spaces. The primary challenges with scaling to higher dimensions are related to the memory storage requirements of large computational graphs. As discussed in the previous section, each of the prior experiments were performed on a static TensorFlow graph, which in many cases has an advantage in runtime performance, yet requires significant memory pre-allocation. Scaling policy and actuator co-design optimization to 2D spaces often requires more memory storage than $64$ GB in a static graph setting. As such, it is recommended to balance dynamic allocation with static graph computation.

For this task, the goal is to control the \ac{2D} Heat equation with homogeneous Dirichlet boundary conditions. The goal is to raise the temperature in four outer regions to $T=1.0$ and raise the temperature in one central region to $T=0.5$. The results of $5000$ iterations, with $100$ rollouts per iteration are depicted in \cref{fig:heat2d}. Similar to the \ac{1D} heat equation, here the pure diffusive nature of the Heat equation is best leveraged by placing actuators as close to the desired regions as possible, as is demonstrated by the algorithm in this case. A video of the controlled state evolution is available at the link provided \footnote{Video: \href{https://youtu.be/yo48a6JqKE0}{https://youtu.be/yo48a6JqKE0}}.

The \ac{SPDE} is spatially discretized using a central difference discretization into $25$ grid points on each axis for a total of $625$ states, and is temporally discretized with a semi-implicit time discretization.  This problem has a dramatic increase in scale compared to the \ac{1D} examples given above, which typically had $64$ states.

\begin{figure*}[t]
    \subfigure[Initial Time Contour]{\includegraphics[height=0.288\linewidth]{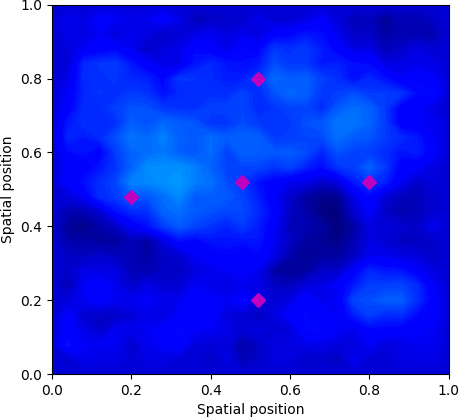}}
    \subfigure[Mid-Time Contour]{\hspace{0.05cm}\includegraphics[height=0.285\linewidth]{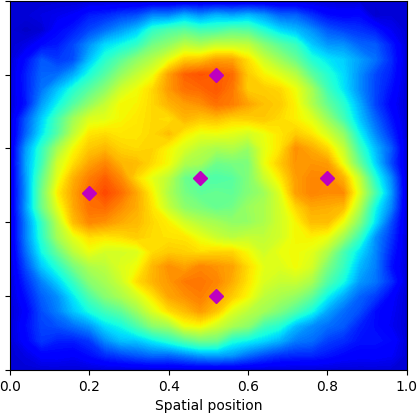}}
    \subfigure[Final Time Contour]{\hspace{0.05cm}\includegraphics[height=0.285\linewidth]{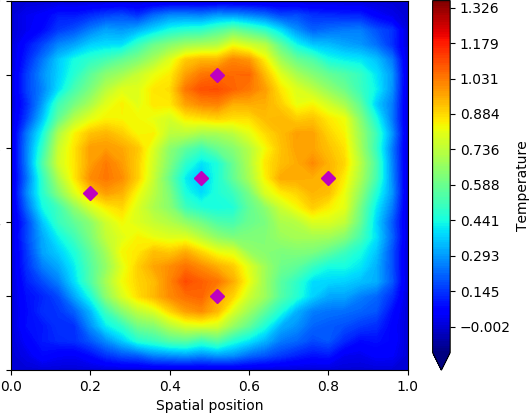}}
    \caption{Controlled 2D Heat Equation Contours of a random trajectory rollout with actuators denoted in magenta and color spectrum denoting temperature. (a) initial time contour with spatially random initial condition (b) half-way time contour (c) final time contour. Videos of a random trajectory rollout of the controlled system evolving in time can be found at \href{https://youtu.be/yo48a6JqKE0}{https://youtu.be/yo48a6JqKE0}}
    \label{fig:heat2d}
\end{figure*}

\begin{figure*}[t]
    \subfigure[State Cost Convergence]{\includegraphics[width=0.492\linewidth]{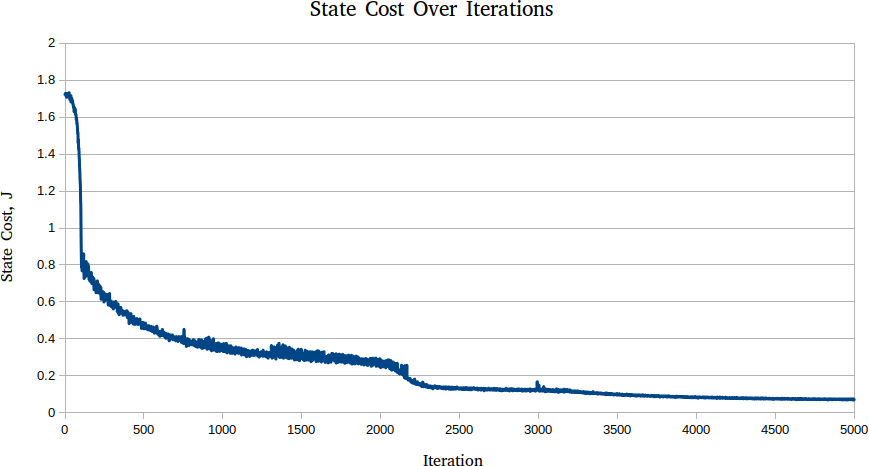}}
    \subfigure[Loss Convergence]{\hspace{0.1cm}\includegraphics[width=0.492\linewidth]{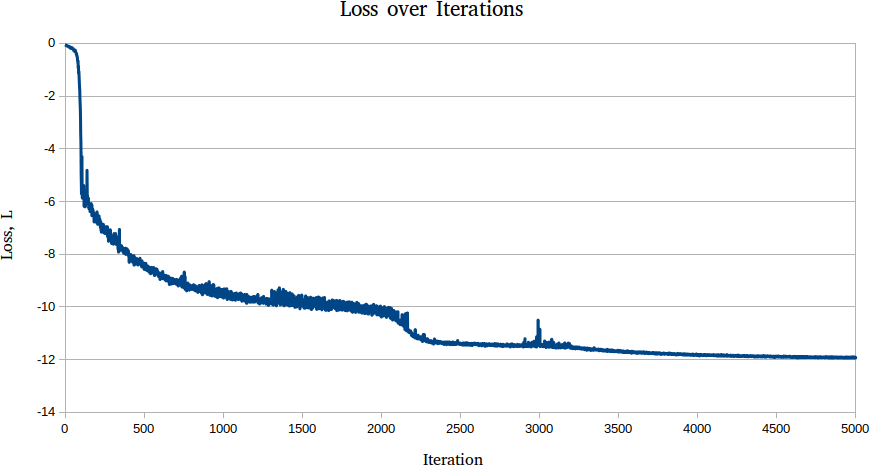}}
    \caption{Convergence of State Cost and Loss for 2D Heat Equation. (a) state cost over iterations for $5000$ iterations (b) loss over iterations for $5000$ iterations.}
    \label{fig:heat2d_convergence}
\end{figure*}

In this case five actuators were provided to the system to achieve this task, and were all intitialized by sampling $x$ and $y$ locations from a uniform distribution over $[0, a]$, where $a$ is the side length. The nonlinear policy network for this experiment utilized two convolutional layers, two max-pool layers, and a fully connected output layer, with ReLu activations throughout. The policy network weights were initialized with the Xavier initialization~\cite{glorot2010understanding}.

The convergence behavior of the algorithm over $5000$ iterations for the obtained solution is depicted in \cref{fig:heat2d_convergence}. As depicted, there is a close correlation between the behavior of the state cost and the behavior of the loss, which is desirable in many cases. However, in many cases this approach can violate strict proportionality between the loss and the state cost in the near term in order to obtain dramatic state cost improvements in the long term. This is reported in greater detail in our prior work~\cite{Evans2019IDVRL}.





\subsection{Policy \& Co-Design Optimization of a Soft Robotic Limb} \label{sec:results_spring_mass}

\begin{figure*}[t]
    \subfigure[Uncontrolled X Final Time Contour]{\includegraphics[width=0.505\linewidth]{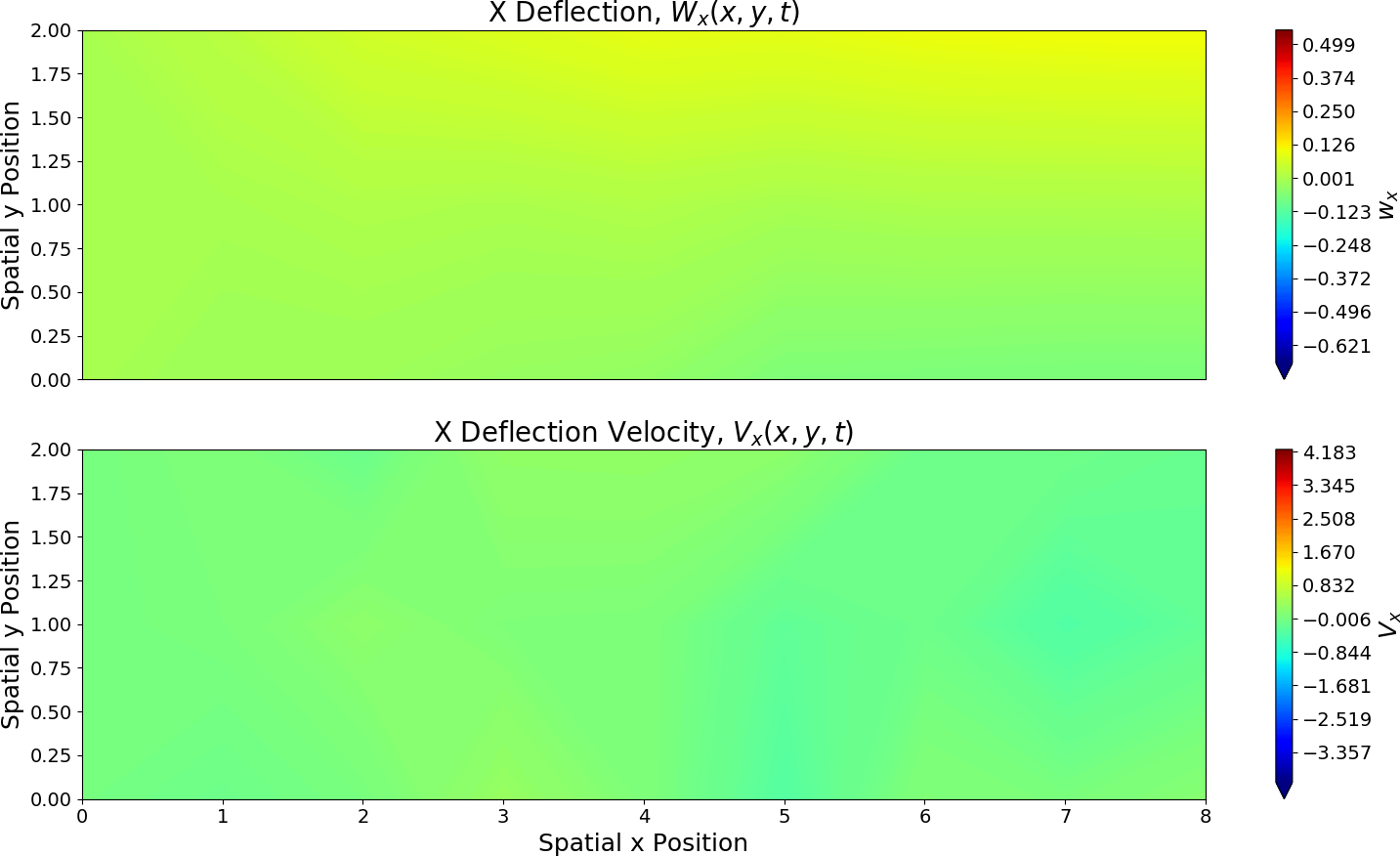}}
    \subfigure[Uncontrolled Y Final Time Contour]{\hspace{0.13cm}\includegraphics[width=0.48\linewidth]{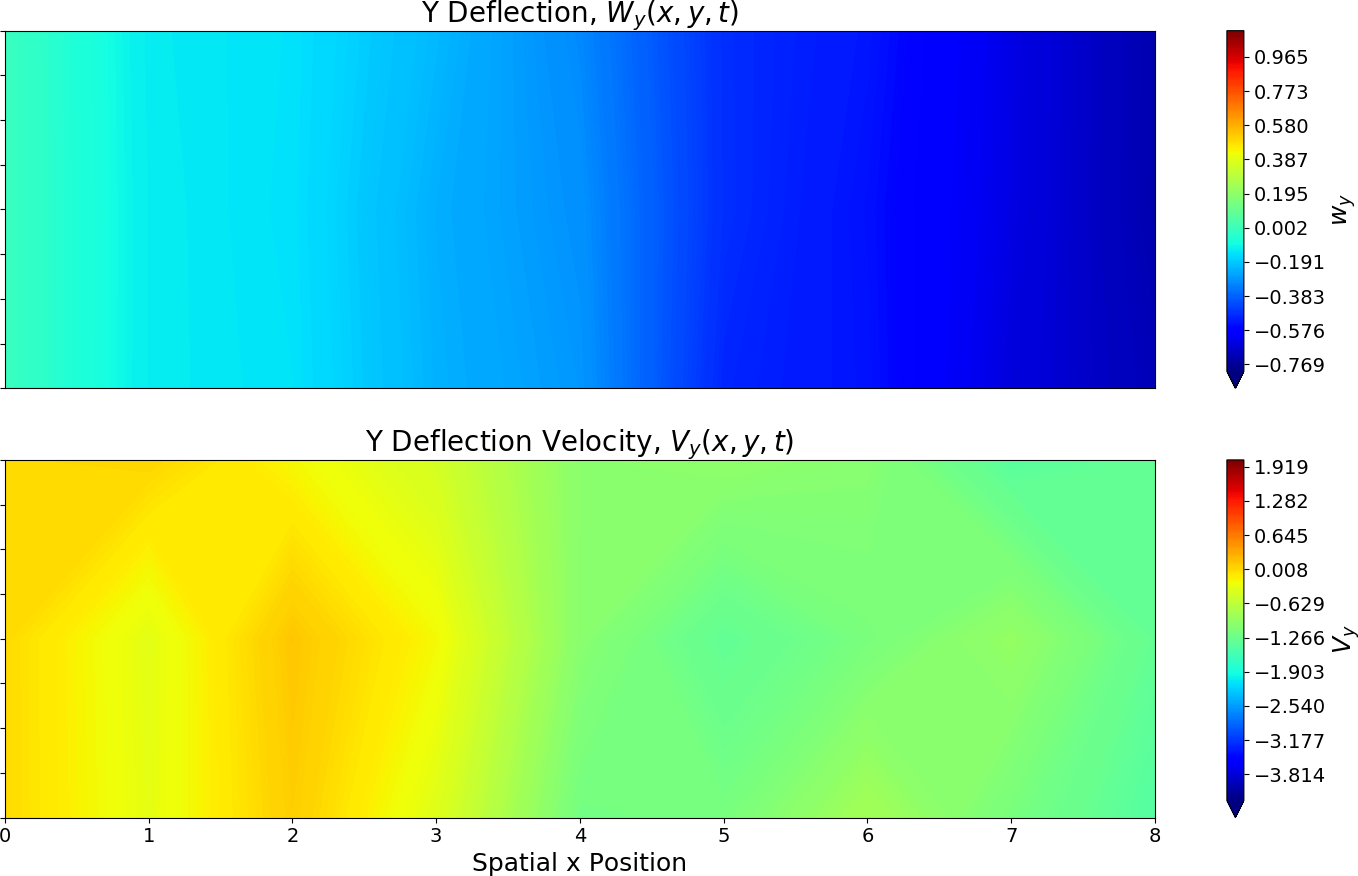}}
    
    \subfigure[Controlled X Final Time Contour]{ \newline\includegraphics[width=0.505\linewidth]{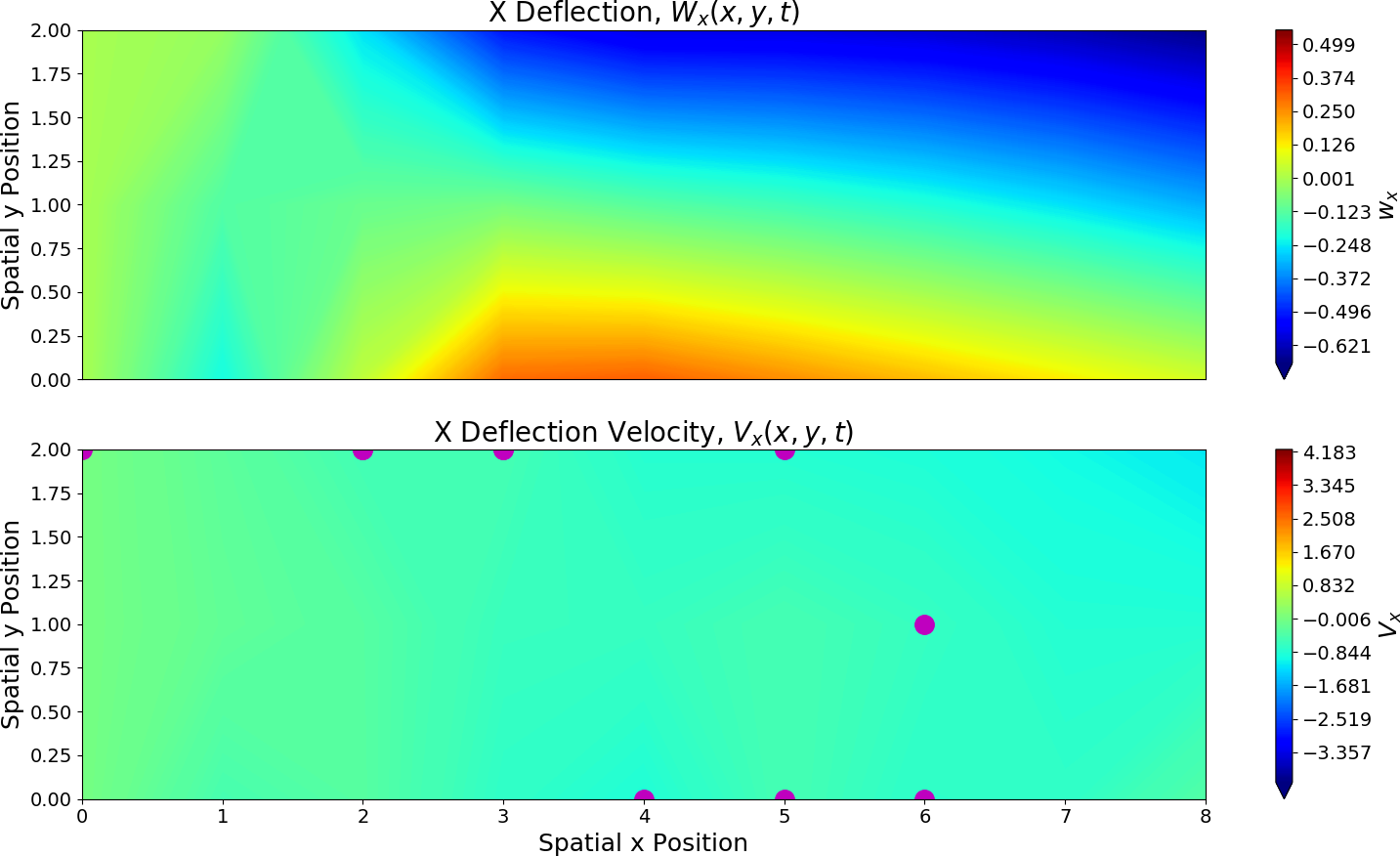}}
    \subfigure[Controlled Y Final Time Contour]{\hspace{0.13cm}\includegraphics[width=0.48\linewidth]{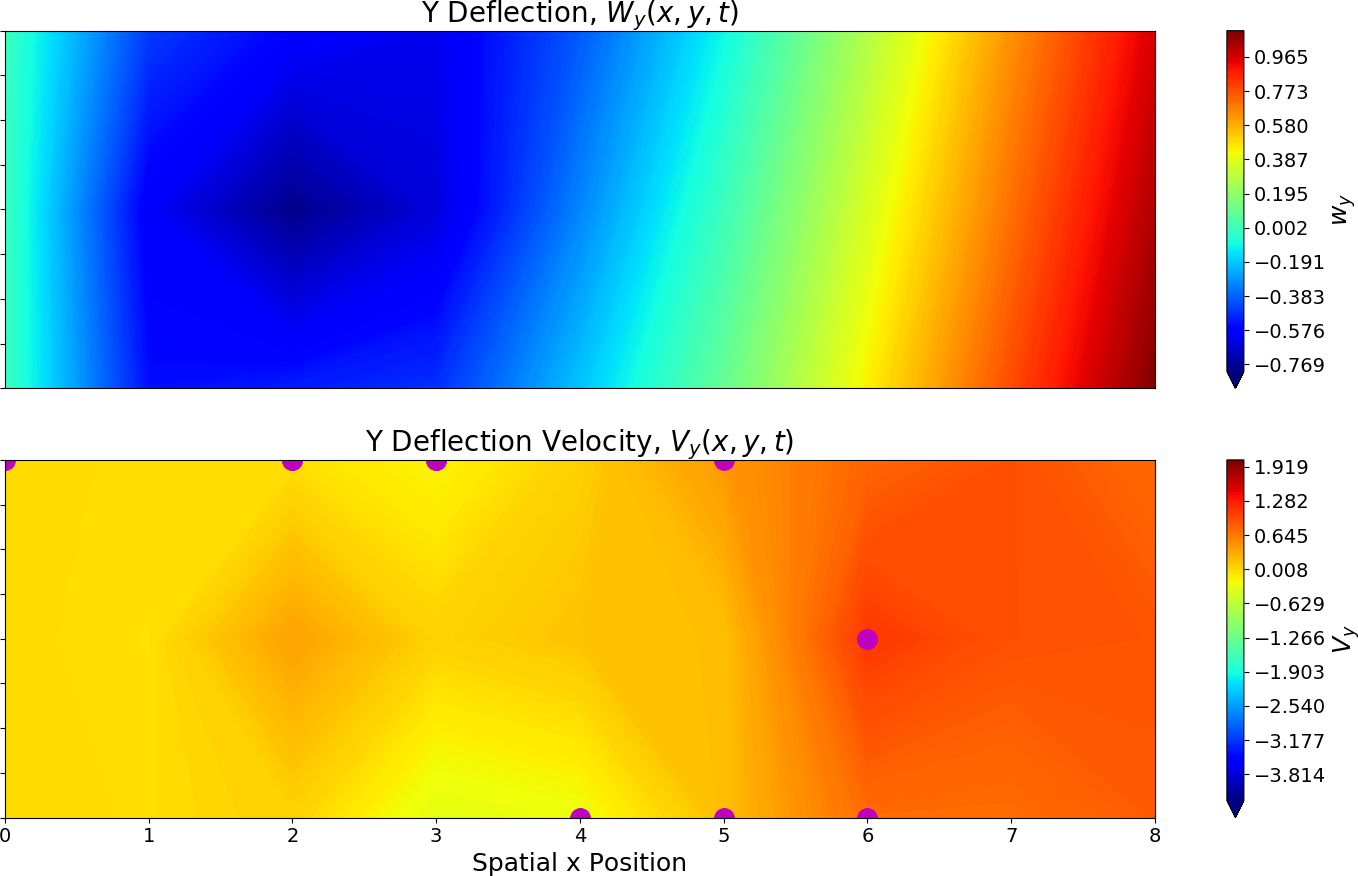}}
    \caption{2D Soft Arm Reaching Task contour plots of a random trajectory rollout at final time, where color represents magnitude. Purple circles represent actuators on the controlled system placed by actuator co-design optimization. (a) uncontrolled final time snapshots of x deflection and x deflection velocity (b) uncontrolled final time snapshots of y deflection and y deflection velocity (c) controlled final time snapshots of x deflection and x deflection velocity (d) controlled final time snapshots of y deflection and y deflection velocity. Videos of a random trajectory rollout of the controlled and uncontrolled system evolving in time can be found at \href{https://youtu.be/yo48a6JqKE0}{https://youtu.be/yo48a6JqKE0}.}
    \label{fig:spring_mass}
\end{figure*}

In the final experiment, the algorithm was applied to a 2D soft manipulator governed by the dynamics in \cref{eq:continuumPDE}. These dynamics exhibit complex nonlinear behavior that in many ways present the most challenging task that has been conducted with our approach; the \ac{2D} \ac{PDE} dynamics are nonlinear, 2nd-order, and stochastic.

For this experiment the task was to jointly optimize the policy and placement of actuators such that the soft limb deflected by one unit vertically while maintaining initial tip position in $x$, subject to a highly exaggerated gravitational force two orders of magnitude larger than normal. The exaggerated gravitational force models an external force preventing task completion and forces better actuation design performance for task completion. The result of $4000$ iterations of the algorithm  with $50$ rollouts per iteration are depicted in \cref{fig:spring_mass}.

The system is spatially discretized by lumping it into a Cartesian grid of point mass particles. Each particle, identified by horizontal index $i$ and vertical index $j$, has a position $\vx_{i,j}$ and a velocity $\vv_{i,j}$. The term $\rho_m \frac{\partial^2 \vs}{\partial t^2 }$ in \cref{eq:continuumPDE} is replaced by $\rho_m \frac{\partial^2 \vx_{i,j}}{\partial t^2 }$. The resting length between adjacent particles is taken to be a constant length $l$. The strain divergence term $\text{div}(\sigma)$ in \cref{eq:continuumPDE} can be approximated by finite difference discretizations of the divergence operator and strain terms. A detailed derivation and discussion of this discretization scheme can be found in \cite{etzmuss2003deriving}. In this case the system has position and velocity states for each dimension over the \ac{2D} region, totaling $108$ states. Temporal discretization is accomplished by solving the spatially discretized system with the explicit Euler method.

Just as in the case of the \ac{1D} Euler-Bernoulli \ac{SPDE}, the actuation of this system enters through the velocity channels, as forces, yet the task is evaluated on the position channels. This results in an evident time-delay between actuation and effect in position space. The soft manipulator was initialized straight and level, with actuators initialized by sampling from a uniform distribution over $[0,w]\times[0,h]$, where $w$ is the width of the system and $h$ is the height. 

The actuation scheme was chosen to mimic a class of manipulators found in biological systems such as cephalopod tentacles or elephant trunks known as muscular hydrostats \cite{walker2005continuum}, in which actuation of incompressible materials conserves the volume of the manipulator. We approximate this property by placing actuators on particles and adjusting the resting length of the tensile connections with adjacent particles. Positive control inputs to the actuator reduce the resting length of horizontal members and increase the resting length of vertical members by the same amount. Negative control inputs have the corresponding opposite effects on the resting length of members. For small actuator inputs, the volume of the manipulator is approximately conserved, resulting in behaviors such as reduction in thickness as the manipulator is axially extended.

The policy network utilized a similar \ac{CNN} architecture as in the \ac{2D} heat experiment, with the entire state space treated as the input. The forces resulting from the the action signal from the policy network being applied to the actuation model act in the velocity space. For this experiment, the convolutional layers of the policy network were initialized with a Xavier~\cite{glorot2010understanding} initialization, while the fully-connected output layer was initialized with zeroes.

As described earlier, a positive control signal on an actuator contracts adjacent horizontal members and expands adjacent vertical members. Thus in order to deflect the tip upwards one unit, the actuators must act in concert to contract the top surface towards the root, and expand the bottom surface towards the tip. Depicted are contour plots of each state of the uncontrolled (top) and controlled (bottom) system at the final time, with optimized actuators depicted in magenta in the velocity states. A video of the uncontrolled and controlled state evolution is available at the link provided \footnote{Video: \href{https://youtu.be/yo48a6JqKE0}{https://youtu.be/yo48a6JqKE0}}. As depicted, the algorithm successfully places actuators in pertinent locations, and is able to achieve the goal state even for exaggerated gravitational forces two orders of magnitude larger than "normal" for our simulation parameters. 

Note that the algorithm co-located four of the ten actuators allocated to the algorithm during optimization, which may be interpreted as an effort by the algorithm to reduce control effort. Such actuator co-location can lead to identification of the minimum set of actuators to achieve a task. The actuator placement in this experiment is quite interesting and merits further analysis. It is challenging to deduce the choices in actuator placement and control, however the authors conjecture that the top left actuator, which is given the largest negative control signal, is mostly responsible for maintaining axial tip location, while the rest of the top surface actuators are responsible for contracting the surface for an upwards deflection. 

These results demonstrate intelligent actuator placements that leverage the system dynamics. The proposed policy and co-design optimization approach is evidently applicable even for nonlinear, 2nd-order, stochastic, \ac{2D} \ac{PDE} systems. The authors intend to continue to  analyze and extend these results further.



\section{Discussion}

Each of the above experiments presents an interesting challenge from the perspective of policy optimization and co-design. The system domains, behavior, and dimensionality are quite varied, and are representative of many of the natural phenomena found in nature and robotics literature. In each of these cases, the optimization parameters are initialized in a random way, so that the optimization does not have a "warm start". The two fully connected and convolutional policy network architectures were rather simple and shallow, did not undergo significant hyper-parameter tuning, and were re-used in all \ac{1D} and all \ac{2D} experiments, respectively. In several of the experiments, optimized actuator locations did not coincide with a human a-priori placement, which in the case of the Nagumo equation resulted in outperformance of the human placement by the algorithm.

The proposed algorithm was successful at joint actuator and policy optimization in each of these cases. Actuator placement optimization through the proposed framework appeared to leverage the dynamics in all of the provided experiments. In practice, the authors found that the presented loss function is equipped with several useful properties. Firstly, it incorporates significant information density, which allows numerous back-propagation gradient paths for both actuator co-design and policy optimization. Secondly, the expectation over rollouts allows a form of exploration of the state space. Finally, exponentiated weighting of trajectory performance allows the loss to clearly differentiate between better state trajectories and worse state trajectories. Indeed rollouts over a system with Cylindrical Wiener noise plays a useful role in the proposed sampling-based optimization scheme, however most experiments were successful with only $50$ rollouts.

The intuition behind the joint policy and actuator co-design optimization presented in this work as opposed to alternating optimization, which is often used for optimization of interdependent variables \cite{boyd2011distributed}, may be explained as follows. It is evident that the specific loss function being used in the proposed approach has simultaneous gradient information for \textit{all} design variables; there is not a separate loss function for policy network performance and a separate loss function for actuator design performance. As such, an alternating approach must either a) only use the dense loss information for policy updates on the inner loop or b) collect gradients with respect to actuator design variables on the inner loop for a large outer loop update. Each option presents obvious issues. The first in essence "wastes" the loss information for a potential actuator design update, while the second potentially sums conflicting gradient information. Instead, the proposed joint optimization approach \textit{leverages} the dense information in the loss function to simultaneously update \textit{all} design variables, where update rates can be simply controlled by the respective learning rates. Thus, one may still prevent actuator variables from updating "too fast" compared to the rate of improvement of the policy variables, or vice versa. 

However, joint optimization is not without fault. Throughout our experiments, RAM usage grows with actuation variables, problem size, time horizon, and the number of policy variables. As we continue to scale further, RAM usage may be a limiting factor. Alternating optimization methods are known to handle this deficiency very well, and accelerated variants of ADMM are an appealing future research direction to tackle scalability to \ac{3D} problem spaces, longer time horizons, deeper networks, and larger sets of actuator co-design optimization variables.

\section{Conclusion}

This work presents a measure-theoretic policy and actuator co-design optimization framework, which was developed in Hilbert spaces for the control of stochastic partial differential equations. Necessary mathematical results for extension to second-order \acp{SPDE} were presented and proved. Novel methods were introduced to decrease computational complexity and increase optimization performance. The resulting loss function was optimized with a popular variant of gradient descent, and the optimization architecture was applied to six simulated \ac{SPDE} experiments that each exhibited unique challenges. The last of which is a biologically inspired model of soft-robotic manipulators with muscular-like actuation, which connects to our goals of establishing capabilities for the further development of soft-body robotics. The results demonstrate that the proposed approach can perform joint policy and actuator co-design optimization on varied complex nonlinear stochastic \ac{PDE} systems.

The presented approach is a new way of performing optimization and can lead to many applications in soft robotics, soft materials, morphing, and continuum mechanics. The results are encouraging to the authors. We plan to continue to develop methods for scalability, explore actuator shape optimization, investigate novel soft-robotic models, and apply the approach to a variety of hard spatio-temporal problems traditionally outside of the realm of robotics research.

\section*{Acknowledgements}
This work was supported by the Army Research Office contract W911NF-20-1-0151. Ethan N. Evans was supported by the SMART scholarship.


\bibliographystyle{ieeetr}
\bibliography{Bibliography}

\begin{thebibliography}{10}

\bibitem{bouten2004stochastic}
L.~Bouten, M.~Guta, and H.~Maassen, ``Stochastic schr{\"o}dinger equations,''
  {\em Journal of Physics A: Mathematical and General}, vol.~37, no.~9,
  p.~3189, 2004.

\bibitem{lord_powell_shardlow_2014}
G.~J. Lord, C.~E. Powell, and T.~Shardlow, {\em An Introduction to
  Computational Stochastic PDEs}.
\newblock Cambridge Texts in Applied Mathematics, Cambridge University Press,
  2014.

\bibitem{gomes2017controlling}
S.~N. Gomes, S.~Kalliadasis, D.~T. Papageorgiou, G.~A. Pavliotis, and
  M.~Pradas, ``Controlling roughening processes in the stochastic
  kuramoto--sivashinsky equation,'' {\em Physica D: Nonlinear Phenomena},
  vol.~348, pp.~33--43, 2017.

\bibitem{ferrari2006analysis}
G.~Ferrari-Trecate, A.~Buffa, and M.~Gati, ``Analysis of coordination in
  multi-agent systems through partial difference equations,'' {\em IEEE
  Transactions on Automatic Control}, vol.~51, no.~6, pp.~1058--1063, 2006.

\bibitem{elamvazhuthi2018pde}
K.~Elamvazhuthi, H.~Kuiper, and S.~Berman, ``Pde-based optimization for
  stochastic mapping and coverage strategies using robotic ensembles,'' {\em
  Automatica}, vol.~95, pp.~356--367, 2018.

\bibitem{aidman2008coupled}
E.~Aidman, V.~Ivancevic, and A.~Jennings, ``A coupled reaction-diffusion field
  model for perception-action cycle with applications to robot navigation,''
  {\em International Journal of Intelligent Defence Support Systems}, vol.~1,
  no.~2, pp.~93--115, 2008.

\bibitem{shapiro2015modeling}
Y.~Shapiro, K.~Gabor, and A.~Wolf, ``Modeling a hyperflexible planar bending
  actuator as an inextensible euler--bernoulli beam for use in flexible
  robots,'' {\em Soft Robotics}, vol.~2, no.~2, pp.~71--79, 2015.

\bibitem{yekutieli2005dynamic}
Y.~Yekutieli, R.~Sagiv-Zohar, R.~Aharonov, Y.~Engel, B.~Hochner, and T.~Flash,
  ``Dynamic model of the octopus arm. i. biomechanics of the octopus reaching
  movement,'' {\em Journal of neurophysiology}, vol.~94, no.~2, pp.~1443--1458,
  2005.

\bibitem{da2014stochastic}
G.~Da~Prato and J.~Zabczyk, {\em Stochastic Equations in Infinite Dimensions}.
\newblock Encyclopedia of Mathematics and its Applications, Cambridge
  University Press, 2014.

\bibitem{fabbri}
G.~Fabbri, F.~Gozzi, and A.~Swiech, {\em Stochastic Optimal Control in Infinite
  Dimensions - Dynamic Programming and HJB Equations}.
\newblock No.~82 in Probability Theory and Stochastic Modelling, {Springer},
  Jan. 2017.
\newblock OS.

\bibitem{TheodorouCDC2012}
E.~Theodorou and E.~Todorov, ``Relative entropy and free energy dualities:
  Connections to path integral and kl control,'' in {\em the Proceedings of
  IEEE Conference on Decision and Control}, pp.~1466--1473, Dec 2012.

\bibitem{Evans2019IDVRL}
E.~N. Evans, M.~A. Pereira, G.~I. Boutselis, and E.~A. Theodorou, ``Variational
  optimization based reinforcement learning for infinite dimensional stochastic
  systems,'' in {\em Conference on Robot Learning}, 2019.

\bibitem{evans2020spatio}
E.~N. Evans, A.~P. Kendall, G.~I. Boutselis, and E.~A. Theodorou,
  ``Spatio-temporal stochastic optimization: theory and applications to optimal
  control and co-design,'' in {\em Proceedings of Robotics: Science and
  Systems}, 2020.

\bibitem{da1994stochastic}
G.~Da~Prato, A.~Debussche, and R.~Temam, ``Stochastic burgers' equation,'' {\em
  Nonlinear Differential Equations and Applications NoDEA}, vol.~1, no.~4,
  pp.~389--402, 1994.

\bibitem{lou2009model}
Y.~Lou, G.~Hu, and P.~D. Christofides, ``Model predictive control of nonlinear
  stochastic pdes: Application to a sputtering process,'' in {\em 2009 American
  Control Conference}, pp.~2476--2483, IEEE, 2009.

\bibitem{StochasticBurgers_1999}
G.~D. Prato and A.~Debussche, ``Control of the stochastic burgers model of
  turbulence,'' {\em SIAM Journal on Control and Optimization}, vol.~37, no.~4,
  pp.~1123--1149, 1999.

\bibitem{moura2013optimal}
S.~J. Moura and H.~K. Fathy, ``Optimal boundary control of reaction--diffusion
  partial differential equations via weak variations,'' {\em Journal of Dynamic
  Systems, Measurement, and Control}, vol.~135, no.~3, p.~034501, 2013.

\bibitem{DaPrato1999}
G.~D. Prato and A.~Debussche, ``Control of the stochastic burgers model of
  turbulence,'' {\em SIAM Journal on Control and Optimization}, vol.~37, no.~4,
  pp.~1123--1149, 1999.

\bibitem{feng2006}
J.~Feng, ``Large deviation for diffusions and hamilton-jacobi equation in
  hilbert spaces,'' {\em Ann. Probab.}, vol.~34, pp.~321--385, 01 2006.

\bibitem{george2018control}
T.~George~Thuruthel, Y.~Ansari, E.~Falotico, and C.~Laschi, ``Control
  strategies for soft robotic manipulators: A survey,'' {\em Soft robotics},
  vol.~5, no.~2, pp.~149--163, 2018.

\bibitem{bieker2019deep}
K.~Bieker, S.~Peitz, S.~L. Brunton, J.~N. Kutz, and M.~Dellnitz, ``Deep model
  predictive control with online learning for complex physical systems,'' {\em
  arXiv preprint arXiv:1905.10094}, 2019.

\bibitem{nair2019cluster}
A.~G. Nair, C.-A. Yeh, E.~Kaiser, B.~R. Noack, S.~L. Brunton, and K.~Taira,
  ``Cluster-based feedback control of turbulent post-stall separated flows,''
  {\em Journal of Fluid Mechanics}, vol.~875, pp.~345--375, 2019.

\bibitem{mohan2018deep}
A.~T. Mohan and D.~V. Gaitonde, ``A deep learning based approach to reduced
  order modeling for turbulent flow control using lstm neural networks,'' {\em
  arXiv preprint arXiv:1804.09269}, 2018.

\bibitem{morton2018deep}
J.~Morton, A.~Jameson, M.~J. Kochenderfer, and F.~Witherden, ``Deep dynamical
  modeling and control of unsteady fluid flows,'' in {\em Advances in Neural
  Information Processing Systems}, pp.~9258--9268, 2018.

\bibitem{rabault2019artificial}
J.~Rabault, M.~Kuchta, A.~Jensen, U.~R{\'e}glade, and N.~Cerardi, ``Artificial
  neural networks trained through deep reinforcement learning discover control
  strategies for active flow control,'' {\em Journal of Fluid Mechanics},
  vol.~865, pp.~281--302, 2019.

\bibitem{satheeshbabu2019open}
S.~Satheeshbabu, N.~K. Uppalapati, G.~Chowdhary, and G.~Krishnan, ``Open loop
  position control of soft continuum arm using deep reinforcement learning,''
  in {\em 2019 International Conference on Robotics and Automation (ICRA)},
  pp.~5133--5139, IEEE, 2019.

\bibitem{spielberg2019learning}
A.~Spielberg, A.~Zhao, Y.~Hu, T.~Du, W.~Matusik, and D.~Rus,
  ``Learning-in-the-loop optimization: End-to-end control and co-design of soft
  robots through learned deep latent representations,'' {\em Advances in Neural
  Information Processing Systems}, vol.~32, pp.~8284--8294, 2019.

\bibitem{Pontryagin1962}
L.~Pontryagin, V.~Boltyanskii, R.~Gamkrelidze, and E.~Mishchenko, {\em The
  mathematical theory of Optimal Processes}.
\newblock New York: Pergamon Press, 1962.

\bibitem{Bellman1964}
R.~Bellman and R.~Kalaba, {\em Selected Papers On mathematical trends in
  Control Theory}.
\newblock Dover Publications, 1964.

\bibitem{yong1999stochastic}
J.~Yong and X.~Zhou, {\em Stochastic Controls: Hamiltonian Systems and HJB
  Equations}.
\newblock Stochastic Modelling and Applied Probability, Springer New York,
  1999.

\bibitem{Pardoux_Book2014}
E.~Pardoux and A.~Rascanu, {\em Stochastic Differential Equations, Backward
  SDEs, Partial Differential Equations}, vol.~69.
\newblock 07 2014.

\bibitem{Fleming2006}
W.~H. Fleming and H.~M. Soner, {\em Controlled Markov processes and viscosity
  solutions}.
\newblock Applications of mathematics, New York: Springer, 2nd~ed., 2006.

\bibitem{williams2017model}
G.~Williams, A.~Aldrich, and E.~A. Theodorou, ``Model predictive path integral
  control: From theory to parallel computation,'' {\em Journal of Guidance,
  Control, and Dynamics}, vol.~40:2, pp.~344--357, 2017.

\bibitem{yang2017optimal}
D.~Yang and J.~Zhong, ``Optimal actuator location of the minimum norm controls
  for stochastic heat equations,'' {\em arXiv preprint arXiv:1710.06079}, 2017.

\bibitem{lim1992method}
K.~Lim, ``Method for optimal actuator and sensor placement for large flexible
  structures,'' {\em Journal of Guidance, Control, and Dynamics}, vol.~15,
  no.~1, pp.~49--57, 1992.

\bibitem{nestorovic2013optimal}
T.~Nestorovi{\'c} and M.~Trajkov, ``Optimal actuator and sensor placement based
  on balanced reduced models,'' {\em Mechanical Systems and Signal Processing},
  vol.~36, no.~2, pp.~271--289, 2013.

\bibitem{kasinathan2013h}
D.~Kasinathan and K.~Morris, ``$h_\infty$-optimal actuator location,'' {\em
  IEEE Transactions on Automatic Control}, vol.~58, no.~10, pp.~2522--2535,
  2013.

\bibitem{chen2011h}
K.~K. Chen and C.~W. Rowley, ``H 2 optimal actuator and sensor placement in the
  linearised complex ginzburg--landau system,'' {\em Journal of Fluid
  Mechanics}, vol.~681, pp.~241--260, 2011.

\bibitem{manohar2018optimal}
K.~Manohar, J.~N. Kutz, and S.~L. Brunton, ``Optimal sensor and actuator
  placement using balanced model reduction,'' {\em arXiv preprint
  arXiv:1812.01574}, 2018.

\bibitem{grigoriev1997pinning}
R.~Grigoriev, M.~Cross, and H.~Schuster, ``Pinning control of spatiotemporal
  chaos,'' {\em Physical Review Letters}, vol.~79, no.~15, p.~2795, 1997.

\bibitem{sinha2013optimal}
S.~Sinha, U.~Vaidya, and R.~Rajaram, ``Optimal placement of actuators and
  sensors for control of nonequilibrium dynamics,'' in {\em 2013 European
  Control Conference (ECC)}, pp.~1083--1088, IEEE, 2013.

\bibitem{vaidya2012actuator}
U.~Vaidya, R.~Rajaram, and S.~Dasgupta, ``Actuator and sensor placement in
  linear advection pde with building system application,'' {\em Journal of
  Mathematical Analysis and Applications}, vol.~394, no.~1, pp.~213--224, 2012.

\bibitem{amstutz2006new}
S.~Amstutz and H.~Andr{\"a}, ``A new algorithm for topology optimization using
  a level-set method,'' {\em Journal of computational physics}, vol.~216,
  no.~2, pp.~573--588, 2006.

\bibitem{lou2003optimal}
Y.~Lou and P.~D. Christofides, ``Optimal actuator/sensor placement for
  nonlinear control of the kuramoto-sivashinsky equation,'' {\em IEEE
  Transactions on Control Systems Technology}, vol.~11, no.~5, pp.~737--745,
  2003.

\bibitem{edalatzadeh2019optimal}
M.~S. Edalatzadeh, D.~Kalise, K.~A. Morris, and K.~Sturm, ``Optimal actuator
  design for vibration control based on lqr performance and shape calculus,''
  {\em arXiv preprint arXiv:1903.07572}, 2019.

\bibitem{lototsky2017stochastic}
S.~V. Lototsky and B.~L. Rozovsky, {\em Stochastic partial differential
  equations}.
\newblock Springer, 2017.

\bibitem{renda2014dynamic}
F.~Renda, M.~Giorelli, M.~Calisti, M.~Cianchetti, and C.~Laschi, ``Dynamic
  model of a multibending soft robot arm driven by cables,'' {\em IEEE
  Transactions on Robotics}, vol.~30, no.~5, pp.~1109--1122, 2014.

\bibitem{webster2010design}
R.~J. Webster~III and B.~A. Jones, ``Design and kinematic modeling of constant
  curvature continuum robots: A review,'' {\em The International Journal of
  Robotics Research}, vol.~29, no.~13, pp.~1661--1683, 2010.

\bibitem{rone2013continuum}
W.~S. Rone and P.~Ben-Tzvi, ``Continuum robot dynamics utilizing the principle
  of virtual power,'' {\em IEEE Transactions on Robotics}, vol.~30, no.~1,
  pp.~275--287, 2013.

\bibitem{godage2016dynamics}
I.~S. Godage, G.~A. Medrano-Cerda, D.~T. Branson, E.~Guglielmino, and D.~G.
  Caldwell, ``Dynamics for variable length multisection continuum arms,'' {\em
  The International Journal of Robotics Research}, vol.~35, no.~6,
  pp.~695--722, 2016.

\bibitem{falkenhahn2015dynamic}
V.~Falkenhahn, T.~Mahl, A.~Hildebrandt, R.~Neumann, and O.~Sawodny, ``Dynamic
  modeling of bellows-actuated continuum robots using the euler--lagrange
  formalism,'' {\em IEEE Transactions on Robotics}, vol.~31, no.~6,
  pp.~1483--1496, 2015.

\bibitem{chirikjian1994hyper}
G.~S. Chirikjian, ``Hyper-redundant manipulator dynamics: A continuum
  approximation,'' {\em Advanced Robotics}, vol.~9, no.~3, pp.~217--243, 1994.

\bibitem{hannan2003kinematics}
M.~W. Hannan and I.~D. Walker, ``Kinematics and the implementation of an
  elephant's trunk manipulator and other continuum style robots,'' {\em Journal
  of robotic systems}, vol.~20, no.~2, pp.~45--63, 2003.

\bibitem{mochiyama2005hyper}
H.~Mochiyama, ``Hyper-flexible robotic manipulators,'' in {\em IEEE
  International Symposium on Micro-NanoMechatronics and Human Science, 2005},
  pp.~41--46, IEEE, 2005.

\bibitem{chirikjian1995kinematics}
G.~S. Chirikjian and J.~W. Burdick, ``The kinematics of hyper-redundant robot
  locomotion,'' {\em IEEE transactions on robotics and automation}, vol.~11,
  no.~6, pp.~781--793, 1995.

\bibitem{till2015efficient}
J.~Till, C.~E. Bryson, S.~Chung, A.~Orekhov, and D.~C. Rucker, ``Efficient
  computation of multiple coupled cosserat rod models for real-time simulation
  and control of parallel continuum manipulators,'' in {\em 2015 IEEE
  International Conference on Robotics and Automation (ICRA)}, pp.~5067--5074,
  IEEE, 2015.

\bibitem{till2019real}
J.~Till, V.~Aloi, and C.~Rucker, ``Real-time dynamics of soft and continuum
  robots based on cosserat rod models,'' {\em The International Journal of
  Robotics Research}, vol.~38, no.~6, pp.~723--746, 2019.

\bibitem{trivedi2008geometrically}
D.~Trivedi, A.~Lotfi, and C.~D. Rahn, ``Geometrically exact models for soft
  robotic manipulators,'' {\em IEEE Transactions on Robotics}, vol.~24, no.~4,
  pp.~773--780, 2008.

\bibitem{rus2015design}
D.~Rus and M.~T. Tolley, ``Design, fabrication and control of soft robots,''
  {\em Nature}, vol.~521, no.~7553, pp.~467--475, 2015.

\bibitem{zheng2012dynamic}
T.~Zheng, D.~T. Branson, R.~Kang, M.~Cianchetti, E.~Guglielmino, M.~Follador,
  G.~A. Medrano-Cerda, I.~S. Godage, and D.~G. Caldwell, ``Dynamic continuum
  arm model for use with underwater robotic manipulators inspired by octopus
  vulgaris,'' in {\em 2012 IEEE International Conference on Robotics and
  Automation}, pp.~5289--5294, IEEE, 2012.

\bibitem{walker2005continuum}
I.~D. Walker, D.~M. Dawson, T.~Flash, F.~W. Grasso, R.~T. Hanlon, B.~Hochner,
  W.~M. Kier, C.~C. Pagano, C.~D. Rahn, and Q.~M. Zhang, ``Continuum robot arms
  inspired by cephalopods,'' in {\em Unmanned Ground Vehicle Technology VII},
  vol.~5804, pp.~303--314, International Society for Optics and Photonics,
  2005.

\bibitem{etzmuss2003deriving}
O.~Etzmuss, J.~Gross, and W.~Strasser, ``Deriving a particle system from
  continuum mechanics for the animation of deformable objects,'' {\em IEEE
  Transactions on Visualization and Computer Graphics}, vol.~9, no.~4,
  pp.~538--550, 2003.

\bibitem{boutselis2019variational}
G.~I. Boutselis, M.~A. Pereira, E.~N. Evans, and E.~A. Theodorou, ``Variational
  optimization for distributed and boundary control of stochastic fields,''
  {\em arXiv preprint arXiv:1904.02274}, 2019.

\bibitem{theodorou2015nonlinear}
E.~Theodorou, ``Nonlinear stochastic control and information theoretic
  dualities: Connections, interdependencies and thermodynamic
  interpretations,'' {\em Entropy}, vol.~17, no.~5, pp.~3352--3375, 2015.

\bibitem{theodorou2018linearly}
E.~A. Theodorou, G.~I. Boutselis, and K.~Bakshi, ``Linearly solvable stochastic
  optimal control for infinite-dimensional systems,'' in {\em 2018 IEEE
  Conference on Decision and Control (CDC)}, pp.~4110--4116, IEEE, 2018.

\bibitem{williams2016aggressive}
G.~Williams, P.~Drews, B.~Goldfain, J.~M. Rehg, and E.~A. Theodorou,
  ``Aggressive driving with model predictive path integral control,'' in {\em
  2016 IEEE International Conference on Robotics and Automation (ICRA)},
  pp.~1433--1440, IEEE, 2016.

\bibitem{duchi2011adaptive}
J.~Duchi, E.~Hazan, and Y.~Singer, ``Adaptive subgradient methods for online
  learning and stochastic optimization,'' {\em Journal of Machine Learning
  Research}, vol.~12, no.~Jul, pp.~2121--2159, 2011.

\bibitem{kingma2014adam}
D.~P. Kingma and J.~Ba, ``Adam: A method for stochastic optimization,'' {\em
  arXiv preprint arXiv:1412.6980}, 2014.

\bibitem{chellapilla2006high}
K.~Chellapilla, S.~Puri, and P.~Simard, ``High performance convolutional neural
  networks for document processing,'' 2006.

\bibitem{ruder2016overview}
S.~Ruder, ``An overview of gradient descent optimization algorithms,'' {\em
  arXiv preprint arXiv:1609.04747}, 2016.

\bibitem{tensorflow}
M.~Abadi, A.~Agarwal, P.~Barham, E.~Brevdo, Z.~Chen, C.~Citro, G.~S. Corrado,
  A.~Davis, J.~Dean, M.~Devin, S.~Ghemawat, I.~Goodfellow, A.~Harp, G.~Irving,
  M.~Isard, Y.~Jia, R.~Jozefowicz, L.~Kaiser, M.~Kudlur, J.~Levenberg,
  D.~Man\'{e}, R.~Monga, S.~Moore, D.~Murray, C.~Olah, M.~Schuster, J.~Shlens,
  B.~Steiner, I.~Sutskever, K.~Talwar, P.~Tucker, V.~Vanhoucke, V.~Vasudevan,
  F.~Vi\'{e}gas, O.~Vinyals, P.~Warden, M.~Wattenberg, M.~Wicke, Y.~Yu, and
  X.~Zheng, ``{TensorFlow}: Large-scale machine learning on heterogeneous
  systems,'' 2015.
\newblock Software available from tensorflow.org.

\bibitem{glorot2010understanding}
X.~Glorot and Y.~Bengio, ``Understanding the difficulty of training deep
  feedforward neural networks,'' in {\em Proceedings of the thirteenth
  international conference on artificial intelligence and statistics},
  pp.~249--256, 2010.

\bibitem{boyd2011distributed}
S.~Boyd, N.~Parikh, and E.~Chu, {\em Distributed optimization and statistical
  learning via the alternating direction method of multipliers}.
\newblock Now Publishers Inc, 2011.

\end{thebibliography}

\end{document}